\def\eqref#1{equation~\ref{#1}}
\def\1{\bm{1}}
\DeclareMathAlphabet{\mathsfit}{\encodingdefault}{\sfdefault}{m}{sl}
\SetMathAlphabet{\mathsfit}{bold}{\encodingdefault}{\sfdefault}{bx}{n}
\newcommand{\R}{\mathbb{R}}
\DeclareMathSymbol{\mh}{\mathord}{operators}{`\-}
\newcommand{\w}{\bm{w}}
\newcommand{\one}{\mathbf{1}}
\def\T{{ \mathrm{\scriptscriptstyle T} }}
\def\de{\overset{\Delta}{=}}
\theoremstyle{plain}
\newtheorem{theorem}{Theorem}[section]
\newtheorem{lemma}[theorem]{Lemma}
\theoremstyle{definition}
\newtheorem{definition}{Definition}[section]
\theoremstyle{remark}
\newtheorem{remark}{Remark}[section]
\title{Toward Unifying Group Fairness Evaluation from a Sparsity Perspective}
\author{Zhecheng~Sheng$^{1}$, Jiawei~Zhang$^{2}$, Enmao~Diao \\
$^{1}$ University of Minnesota, $^{2}$ University of Kentucky\\
\texttt{sheng136@umn.edu}, \texttt{jiawei.zhang@uky.edu},  \texttt{diao\_em@hotmail.com}\\
}
\begin{document}

\maketitle

\begin{abstract}
Ensuring algorithmic fairness remains a significant challenge in machine learning, particularly as models are increasingly applied across diverse domains. While numerous fairness criteria exist, they often lack generalizability across different machine learning problems. This paper examines the connections and differences among various sparsity measures in promoting fairness and proposes a unified sparsity-based framework for evaluating algorithmic fairness. The framework aligns with existing fairness criteria and demonstrates broad applicability to a wide range of machine learning tasks. We demonstrate the effectiveness of the proposed framework as an evaluation metric through extensive experiments on a variety of datasets and bias mitigation methods. This work provides a novel perspective to algorithmic fairness by framing it through the lens of sparsity and social equity, offering potential for broader impact on fairness research and applications. Our code implementation is available \href{https://github.com/zcsheng95/Toward-Unifying-Group-Fairness-Evaluation-from-a-Sparsity-Perspective}{\textcolor{blue}{here}}.
\end{abstract}

\section{Introduction}
\label{sec:intro}
Algorithmic fairness has been a key research challenge in machine learning.
On the one hand, the growing adoption of automated machine learning models has streamlined decision-making in fields such as healthcare and finance. On the other hand, these models may produce biased and unfavorable outcomes for certain groups or individuals. This may be due to intrinsic biases in real-world datasets, which are influenced by societal biases against historically marginalized groups \citep{lee2018detecting, buolamwini2018gender}.

Research on mitigating model biases has led to the development of numerous fairness criteria and algorithms, along with extensive comparative studies evaluating their performance \citep{bellamy2018ai, friedler2019comparative, wei2021optimized, alghamdi2022beyond}. However, most existing works focus on classification problems with binary targets \citep{agarwal2018reductions, zeng2022bayes, gaucher2023fair} or binary group membership \citep{denis2024fairness}, while fewer studies address fairness in regression problems \citep{agarwal2019fair, chzhen2020fair, xian2024differentially}.
Moreover, the majority of existing group fairness notions are confined to a limited set of widely adopted criteria \citep{calder2009constraint, hardt2016equality, agarwal2019fair}, restricting their applicability to many real-world scenarios. Recent works by \citet{alghamdi2022beyond} and \citet{xian2023fair} extend bias mitigation algorithms to multi-class problems and broaden the definition of existing fairness criteria. However, a universal fairness framework applicable across a broad range of machine learning problems remains absent.

Inspired by the Gini Index \citep{gini1912variabilità}, widely used to evaluate sociological inequality in economics, and recent advances in sparsity measures \citep{diao2023pruning}, we connect algorithmic fairness assessment to the quantification of group vector sparsity by considering the full value distribution. In this work, we bridge the gap by introducing a unified, sparsity-based framework for quantifying algorithmic fairness. Our contributions can be summarized as follows:

\begin{enumerate}
\item
For sparsity measures, we focus on the recently proposed PQ Index \citep{diao2023pruning} and provide a theoretical guarantee of its effectiveness in measuring fairness (Section \ref{sec:sparsity}). Furthermore, we highlight its connection to the Maximum Pairwise Difference (MPD), a widely used method in algorithmic fairness, and the Gini Index, a well-established measure of inequality (unfairness) in economics.
\item
We introduce a novel, unified framework for measuring algorithmic fairness based on the sparsity of the full distribution of group-wise outputs (Section \ref{sec:notion}). Through decoupling of MPD from fairness evaluation, this framework integrates existing fairness criteria and provides the flexibility for multi-group, multi-class, and regression problems, which are typically treated separately in existing studies.
\item
To evaluate the performance of the proposed framework, we conduct extensive experiments across multiple datasets and bias mitigation techniques (Section \ref{sec:exp}). We demonstrate the framework’s effectiveness by aligning the proposed metrics with established ones across benchmarks, and its broader applicability through analysis in intersectional fairness settings.
\end{enumerate}

\vspace{-0.5cm}
\section{Related Work}
\vspace{-0.3cm}
\paragraph{Sparsity Measures.} 
Sparsity embodies the idea that a vector's magnitude is primarily determined by a few large components, reflecting inequality in the distribution of the vector components \citep{gini1912variabilità}.
It is important and widely utilized in various fields such as statistics and signal processing \citep{tibshirani1996regression, donoho2006compressed, akcakaya2008frame}. Various measures of sparsity have been proposed from different perspectives \citep{hurley2009comparing}. The Gini Index \citep{gini1912variabilità} is a well-established measure of inequality in wealth or welfare distribution in economics \citep{dalton1920measurement, PORATH1994443, rickard2004gini}. Another type of sparsity measure is based on $\ell_p$ norms. For instance, $\ell_1$-norm-based constraints are frequently applied in function approximation \citep{barron1993universal}, model regularization, and variable selection \citep{tibshirani1996regression, chen2001atomic}.
The PQ Index, defined as a ratio of $\ell_p$ norms, has been used for pruning deep neural networks \citep{hurley2009comparing, diao2023pruning}. Motivated by the desirable properties of the PQ Index, this paper explores its theoretical foundations and applies it in the proposed fairness framework.

\vspace{-0.2cm}
\paragraph{Algorithmic Fairness.} Group fairness evaluates model predictions in relation to sensitive attributes. Among various group fairness criteria, statistical parity \citep{calder2009constraint} and equalized odds \citep{hardt2016equality} are the most widely recognized in the fairness literature. In addition, statistical learning methods, including mutual information \citep{mary2019fairness, steinberg2020fast, roh2020fr} and correlation \citep{beutel2019putting, baharlouei2019r, grari2021learning}, have been employed to quantify the extent of fairness violations. \citet{han2023retiringdeltadpnewdistributionlevel} recently proposed a distribution-level metric for classification using the total variation distance between the predicted probabilities of two sensitive groups. In contrast, our framework extends and unifies existing group-level fairness notions and can be applied to various machine learning problems.

\vspace{-0.2cm}
\paragraph{Bias Mitigation.} Fairness-promoting algorithms are generally categorized into three families \citep{angwin2016machine}: pre-processing, in-processing, and post-processing. Pre-processing algorithms focus on transforming the data through feature editing \citep{feldman2015certifying, calmon2017optimized} or reweighting \citep{kamiran2012data}. In-processing approaches consider a fair risk minimization problem and impose the fairness constraint during model optimization \citep{agarwal2018reductions, zhang2018mitigating, baharlouei2019r, cho2020fair, lowy2021stochastic}. Post-processing methods take in a biased base model and project its outputs to satisfy fairness constraints \citep{hardt2016equality, kamiran2012decision, pleiss2017fairness}. We include recent works under in-processing and post-processing to validate our proposed framework.

\vspace{-0.3cm}
\section{Sparsity}
\vspace{-0.2cm}
\label{sec:sparsity}
Let $\w=[w_1,\dots,w_d]^\T \in \mathbb{R}_+^d$ be a vector from the
$d$-dimensional space of non-negative real numbers, where ``$\T$'' denotes the transpose of a vector. 
Denote the values of the largest and smallest components of $\w$ by $w_{max}$ and $w_{min}$, respectively.
Let $\one_d \de [1,\dots,1]^\T$. A sparsity measure $S(\w)$ quantifies the mass distribution among components of $\w$, with a larger value indicating higher sparsity. Existing fairness metrics often focus on measuring outcome gaps for the worst-case. A key observation is that many of these metrics can be decomposed into two components: a per-group evaluation metric and a Maximum Pairwise Distance (MPD) used for group comparisons. In this section, we explore the theoretical connections among the MPD and two sparsity measures, the Gini Index and the PQ Index. 
 
\begin{definition}[Maximum Pairwise Difference] \label{def:mpd}
The Maximum Pairwise Difference of $\w$ is
$$
MPD(\w) \de \max_{i,j\in\{1,\dots,d\}}|w_{i} - w_{j}|.
$$ 
\end{definition}

\begin{definition}[Gini Index]
The Gini Index of $\w$ is
$$
Gini(\w) \de \frac{\sum_{i=1}^d \sum_{j=1}^d\left|w_i-w_j\right|}{2 d \sum_{i=1}^d w_i}. 
$$
\end{definition}

\begin{definition}[PQ Index]
For any $0<p<q$, the PQ Index of $\w$ is
$$
\mathbf{I}_{p, q}(\w)=1-d^{\frac{1}{q}-\frac{1}{p}} \frac{\|\w\|_p}{\|\w\|_q}, 
$$
where  $\|\w\|_p=\bigl(\sum_{i=1}^d\left|w_i\right|^p\bigr)^{1 / p}$ is the $\ell_p$-norm of $\w$ for any $p>0$.
\end{definition}

By definition, all the above sparsity measures attain their minimum value of 0 when the components of
$\w$ are equal.
Both the PQ Index and the Gini Index are scale-invariant in the sense that multiplying $\w$ by a positive factor does not change the values of
$\mathbf{I}_{p, q}(\w)$ or $Gini(\w)$.
However, the Maximum Pairwise Difference lacks this property.

\vspace{-0.2cm}
\paragraph{Relation to Fairness.} Fairness metrics in machine learning~\citep{dwork2012fairness, hardt2016equality} and the Gini Index both reflect distributive justice~\citep{everett2015justice}, particularly Rawls’ principle~\citep{tao2014rawls, rawls2017theory}. Originally an economic measure of inequality~\citep{gini1912variabilità, sen1997economic, cowell2011measuring}, the Gini Index is well-suited to fairness analysis in machine learning due to its ability to capture disparities across the full distribution~\citep{do2022optimizing, li2023heterogeneity}. However, criteria such as Statistical Parity (MPD among sensitive groups)~\citep{alghamdi2022beyond, xian2024optimal} may overlook small-scale relative differences. The Gini Index and the PQ Index, which satisfy all six ideal sparsity properties~\citep{hurley2009comparing}, address this limitation by evaluating the entire output distribution, where higher sparsity indicates lower fairness.

\vspace{-0.2cm}
\subsection{Properties of PQ Index}
\vspace{-0.1cm}
\citet{diao2023pruning} have shown that $0 \leq \mathbf{I}_{p, q}(\w) \leq 1-d^{\frac{1}{q}-\frac{1}{p}}$, and a larger $\mathbf{I}_{p, q}(\w)$ indicates a sparser vector. Additionally, PQ Index satisfies the six properties of an ideal sparsity measure proposed by  \citet{dalton1920measurement, rickard2004gini}. 
These properties require the sparsity to remain unchanged when $\w$ is multiplied by a positive scalar or appended by a duplicate. 
Moreover, adding a positive constant to each component of $\w$, or transferring $\alpha \in\left(0,\left(w_i-w_j\right) / 2\right)$ from $w_i$ to $w_j$ where $i,j\in\{1,\dots,d\}$ and $w_i>w_j$ reduce the sparsity. 
Additionally, appending $\w$ with a zero or adding a positive constant to a component that is sufficiently large will increase the sparsity. 
The details of them can be found in Appendix~\ref{subsec_SixProperties}.

In this work, we deepen the understanding of the PQ Index by providing the following theoretical properties. Theorems 3.1–3.4 characterize the properties of the PQ Index in quantifying fairness. Theorem 3.5 and 3.6 demonstrate connections and differences among MPD, Gini Index, and PQ Index. The proofs are presented in the Appendix~\ref{append:proofs}. The theorems bridges the proposed sparsity measure and fairness metrics commonly used in the machine learning literature, such as Statistical Parity (SP) and Equalized Odds (EO), both of which involve Maximum Pairwise Difference (MPD) as a component.
\begin{theorem}\label{thm_PQmax}
For $\w$, if there exists $k\in\{1,\dots,d\}$ such that $w_k\neq 0$ and $w_j =0$ ($j\neq k$), then:
\begin{equation*}
\mathbf{I}_{p, q}(\w) = 1 - d^{\frac{1}{q} - \frac{1}{p}}
\end{equation*}
\end{theorem}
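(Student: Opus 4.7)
The plan is to verify the identity by a direct computation of the two $\ell$-norms appearing in the definition of $\mathbf{I}_{p,q}(\w)$, specialized to a vector that has exactly one nonzero coordinate. Since the hypothesis forces $w_j = 0$ for every $j \neq k$, the sums defining $\|\w\|_p$ and $\|\w\|_q$ each collapse to a single term, namely $|w_k|^p$ and $|w_k|^q$ respectively. Taking the corresponding roots yields $\|\w\|_p = |w_k| = \|\w\|_q$, so the ratio $\|\w\|_p / \|\w\|_q$ equals $1$.

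Substituting this into
\begin{equation*}
\mathbf{I}_{p,q}(\w) = 1 - d^{\frac{1}{q}-\frac{1}{p}}\,\frac{\|\w\|_p}{\|\w\|_q}
\end{equation*}
immediately gives the claimed value $1 - d^{\frac{1}{q}-\frac{1}{p}}$. The argument requires only the assumption that $0 < p < q$ so that the exponents are well-defined and the factor $d^{\frac{1}{q}-\frac{1}{p}}$ is unambiguous; no appeal to the six ideal sparsity properties or to any prior theorem is needed.

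There is no real obstacle in the derivation itself. The only thing worth emphasizing in the write-up is the interpretation: the quantity $1 - d^{\frac{1}{q}-\frac{1}{p}}$ is precisely the upper bound on $\mathbf{I}_{p,q}(\w)$ noted by \citet{diao2023pruning} and recalled at the start of Section~3.1. Thus this theorem identifies the class of maximizers of the PQ Index, confirming that a one-hot nonnegative vector is viewed as the ``most sparse'' (equivalently, in the fairness context, the least fair) configuration. I would therefore phrase the proof as a two-line calculation followed by a remark connecting the result to the previously stated range of $\mathbf{I}_{p,q}$.
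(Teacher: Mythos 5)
Your proof is correct and matches the paper's own argument: both reduce $\|\w\|_p$ and $\|\w\|_q$ to $|w_k|$ when only the $k$-th coordinate is nonzero, so the ratio is $1$ and the claim follows by substitution into the definition. No difference in approach worth noting.
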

\begin{remark}
The above theorem indicates that the PQ Index reaches its maximum value when the vector contains only a single nonzero component. \label{remark:1}
\end{remark}

\begin{theorem}\label{thm_PQminimumUniq}
    $\mathbf{I}_{p, q}(\w)$ is minimized if and only if $\w=c\cdot\one_d $, where $c$ is any positive constant.
\end{theorem}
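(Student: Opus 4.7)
The plan is to reduce the claim to the power-mean (equivalently, Jensen) inequality applied to the exponent $q/p > 1$, and to extract the equality condition from strict convexity. Since $d^{1/q-1/p}$ is a strictly positive constant, minimizing $\mathbf{I}_{p,q}(\w) = 1 - d^{1/q-1/p}\|\w\|_p/\|\w\|_q$ is equivalent to maximizing the ratio $\|\w\|_p/\|\w\|_q$. Because the PQ Index is scale-invariant, I may also normalize $\w$ during the argument if convenient.

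First, I would establish the one-sided power-mean bound
\begin{equation*}
\Bigl(\tfrac{1}{d}\sum_{i=1}^d w_i^p\Bigr)^{1/p} \;\le\; \Bigl(\tfrac{1}{d}\sum_{i=1}^d w_i^q\Bigr)^{1/q},
\end{equation*}
which, after clearing the $1/d$ factors and raising to appropriate powers, is exactly $d^{1/q-1/p}\|\w\|_p \le \|\w\|_q$. This in turn gives $d^{1/q-1/p}\|\w\|_p/\|\w\|_q \le 1$ and hence $\mathbf{I}_{p,q}(\w)\ge 0$, recovering the already-known lower bound but now with a route to characterize equality.

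Next, I would derive the inequality cleanly via Jensen applied to the strictly convex map $\phi(t) = t^{q/p}$ on $[0,\infty)$ (strictly convex since $q/p>1$). Letting $X$ be the random variable taking value $w_i^p$ with probability $1/d$, Jensen's inequality yields
\begin{equation*}
\phi\bigl(\E[X]\bigr) \;\le\; \E[\phi(X)] \;=\; \tfrac{1}{d}\sum_{i=1}^d w_i^q,
\end{equation*}
which rearranges to the power-mean bound above. The key payoff is the equality criterion: for a strictly convex $\phi$, Jensen's inequality is tight if and only if $X$ is almost-surely constant, which here means $w_1^p = \cdots = w_d^p$, and since $\w\in\R_+^d$ this forces $w_1=\cdots=w_d$.

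Finally, I would check the reverse direction by direct substitution: for $\w = c\one_d$ with $c>0$, $\|\w\|_p = c\,d^{1/p}$ and $\|\w\|_q = c\,d^{1/q}$, so $d^{1/q-1/p}\|\w\|_p/\|\w\|_q = 1$ and $\mathbf{I}_{p,q}(\w)=0$, attaining the minimum. I do not anticipate a real obstacle here; the only subtlety is the degenerate edge case $\w = \vzero$, which is excluded by the standing assumption $\w\in\R_+^d$ interpreted with $\|\w\|_q>0$ (since otherwise $\mathbf{I}_{p,q}$ is undefined). Beyond that, the argument is a one-line reduction to strict convexity of $t\mapsto t^{q/p}$.
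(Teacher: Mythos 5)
Your proposal is correct and follows essentially the same route as the paper: the paper applies H\"older's inequality with $a_i=1$, $b_i=w_i^p$, $r_1=q/(q-p)$, $r_2=q/p$, which in this instance is exactly the power-mean bound you derive via Jensen applied to $t\mapsto t^{q/p}$, and both arguments conclude by reading off the equality condition (all $w_i^p$ equal, hence all $w_i$ equal). The only difference is cosmetic — which classical inequality is named — so nothing substantive separates the two proofs.
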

\begin{remark}
The above theorem indicates that the minimizer of $\mathbf{I}_{p, q}(\w)$ has equal components. This minimizer is unique up to a scalar factor $c$.  \label{remark:2}
\end{remark}

\begin{theorem}\label{thm_PQl2Distance}
For $p=1$ and $q=2$, 
$$
\biggl\|\frac{\w}{\|\w\|_2}-d^{-\frac{1}{2}}\cdot\one_d\biggr\|_2=\sqrt{2\mathbf{I}_{1, 2}(\w)}.
$$
\end{theorem}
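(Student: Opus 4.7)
The plan is to expand the squared $\ell_2$ distance on the left-hand side, observe that both vectors involved have unit $\ell_2$ norm, reduce the problem to computing a single inner product, and then recognize that inner product as exactly the term appearing in the definition of $\mathbf{I}_{1,2}(\w)$.

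First I would set $\vu \de \w/\|\w\|_2$ and $\vv \de d^{-1/2}\one_d$, and note that by construction $\|\vu\|_2 = 1$ and $\|\vv\|_2 = \sqrt{d \cdot d^{-1}} = 1$. Then the polarization identity gives
\begin{equation*}
\|\vu - \vv\|_2^2 = \|\vu\|_2^2 - 2\langle \vu,\vv\rangle + \|\vv\|_2^2 = 2 - 2\langle \vu,\vv\rangle.
\end{equation*}

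Next I would compute the inner product. Since $\w \in \mathbb{R}_+^d$ by the standing assumption of Section~\ref{sec:sparsity}, we have $\|\w\|_1 = \sum_{i=1}^d w_i$, so
\begin{equation*}
\langle \vu,\vv\rangle = \sum_{i=1}^d \frac{w_i}{\|\w\|_2}\cdot d^{-1/2} = d^{-1/2}\,\frac{\|\w\|_1}{\|\w\|_2}.
\end{equation*}
Substituting back and using the definition of $\mathbf{I}_{p,q}$ at $p=1$, $q=2$, which gives $\mathbf{I}_{1,2}(\w) = 1 - d^{-1/2}\|\w\|_1/\|\w\|_2$, I obtain
\begin{equation*}
\|\vu - \vv\|_2^2 = 2\Bigl(1 - d^{-1/2}\tfrac{\|\w\|_1}{\|\w\|_2}\Bigr) = 2\,\mathbf{I}_{1,2}(\w).
\end{equation*}
Taking square roots yields the claim.

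There is no real obstacle here; the only subtlety worth flagging is that the identification $\sum_i w_i = \|\w\|_1$ relies on the non-negativity convention $\w \in \mathbb{R}_+^d$ fixed at the start of Section~\ref{sec:sparsity} (otherwise only $\sum_i w_i \le \|\w\|_1$ holds, and the equality would degrade to an inequality). I would state this explicitly at the step where the inner product is computed. One could also add a one-line remark that, since $\mathbf{I}_{1,2}(\w) \ge 0$ by Theorem~\ref{thm_PQminimumUniq} (or by Cauchy--Schwarz applied to $\w$ and $\one_d$), the square root on the right-hand side is well-defined.
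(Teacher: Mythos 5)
Your proof is correct, and it takes a more direct route than the paper's. You expand $\bigl\|\w/\|\w\|_2 - d^{-1/2}\one_d\bigr\|_2^2 = 2 - 2\langle \w/\|\w\|_2,\, d^{-1/2}\one_d\rangle$ using the fact that both vectors are unit, identify the inner product as $d^{-1/2}\|\w\|_1/\|\w\|_2$ via non-negativity of $\w$, and read off $2\,\mathbf{I}_{1,2}(\w)$ from the definition — a two-line algebraic argument. The paper instead argues geometrically: after normalizing to $\|\w\|_2=1$, it sets $m = \sqrt{d}\bigl(1-\mathbf{I}_{1,2}(\w)\bigr)$, observes that $\w$ lies on the intersection of the unit sphere with the hyperplane $\|\w\|_1 = m$, shows the foot of the normal $(m/d)\one_d$ is the center of that intersection sphere, and then applies the Pythagorean theorem to decompose the distance from $\w$ to $d^{-1/2}\one_d$ into an in-plane and a normal component. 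The underlying algebra is essentially the same (the paper's distance-to-center computation is also an inner-product expansion using $\|\w\|_1 = \sum_i w_i$), but your version skips the intermediate point and the Pythagorean step entirely, which is shorter and arguably cleaner; what the paper's route buys is the geometric picture that the level sets of $\mathbf{I}_{1,2}$ on the unit sphere are spheres centered on the diagonal direction. Your two flagged subtleties — that $\sum_i w_i = \|\w\|_1$ uses $\w\in\mathbb{R}_+^d$, and that $\mathbf{I}_{1,2}(\w)\ge 0$ makes the square root well-defined — are both apt and worth stating.
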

\begin{remark}
The above theorem shows that 
$\mathbf{I}_{1, 2}(w)$ quantifies the distance between the $\w$, scaled to have unit $l_2$ norm, and the unit vector with equal components. Thus, as $\mathbf{I}_{1, 2}(w)$ decreases, the normalized $\w$ approaches $d^{-\frac{1}{2}}\cdot\one_d$. 
\end{remark}

\begin{theorem}\label{thm_WwithSamePQ}
Let $p=1$, $q=2$. Assume that one component of $\w$ is strictly larger than the others.
We remove $\tilde c$ ($0<\tilde c < w_1$) from that component and add $\tilde c/(d-1)$ to the remaining components and denote the resulting vector by $\tilde \w$.
Without loss of generality, suppose $w_1=w_{max}$ and $w_1>w_i$ $ (i=2,\dots,d)$. Then,
$$
\tilde \w = [w_1 - \tilde c, w_{2} + \tilde c/(d-1),\dots,  w_{d} + \tilde c/(d-1)].
$$
If $\tilde w_1 = \tilde w_{max}$, we have 
$$
\mathbf{I}_{1, 2}(\tilde\w)<\mathbf{I}_{1, 2}(\w). 
$$

\end{theorem}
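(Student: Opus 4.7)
The plan is to reduce the claim to a comparison of $\ell_2$ norms, since by definition
$$\mathbf{I}_{1,2}(\w)=1-d^{-1/2}\,\frac{\|\w\|_1}{\|\w\|_2},$$
so $\mathbf{I}_{1,2}(\tilde\w)<\mathbf{I}_{1,2}(\w)$ is equivalent to $\|\tilde\w\|_1/\|\tilde\w\|_2 > \|\w\|_1/\|\w\|_2$. First I would observe that the prescribed transformation is $\ell_1$-preserving: we subtract $\tilde c$ from the first entry and add a total of $(d-1)\cdot\tilde c/(d-1)=\tilde c$ to the others, so $\|\tilde\w\|_1=\|\w\|_1$. Thus it suffices to show $\|\tilde\w\|_2^2<\|\w\|_2^2$.

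Next I would expand the difference $\|\tilde\w\|_2^2-\|\w\|_2^2$ directly. Writing $\bar w=\tfrac{1}{d-1}\sum_{i=2}^{d}w_i$, the cross terms collapse into
$$\|\tilde\w\|_2^2-\|\w\|_2^2 = -2\tilde c\bigl(w_1-\bar w\bigr)+\tilde c^{\,2}\cdot\frac{d}{d-1}.$$
To control the first term, I would invoke the hypothesis that $\tilde w_1$ remains the maximum of $\tilde\w$, i.e.\ $w_1-\tilde c\ge w_i+\tilde c/(d-1)$ for every $i\ge 2$. This rearranges to $w_1-w_i\ge \tilde c\cdot d/(d-1)$, and averaging over $i=2,\dots,d$ gives $w_1-\bar w\ge \tilde c\cdot d/(d-1)$. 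Plugging this in yields
$$\|\tilde\w\|_2^2-\|\w\|_2^2\le -2\tilde c^{\,2}\cdot\frac{d}{d-1}+\tilde c^{\,2}\cdot\frac{d}{d-1}=-\tilde c^{\,2}\cdot\frac{d}{d-1}<0,$$
which is the strict inequality we need. Since $\|\tilde\w\|_1=\|\w\|_1$ and $\|\tilde\w\|_2<\|\w\|_2$, we conclude $\mathbf{I}_{1,2}(\tilde\w)<\mathbf{I}_{1,2}(\w)$.

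The proof is essentially mechanical once one notices the $\ell_1$-invariance, so the only real subtlety is the use of the hypothesis $\tilde w_1=\tilde w_{\max}$: without it the cross term $-2\tilde c(w_1-\bar w)$ is not guaranteed to dominate $\tilde c^{\,2}\cdot d/(d-1)$, and indeed one could imagine an ``over-transfer'' in which mass moves from the top entry to an entry that then exceeds it, potentially increasing sparsity. Thus the main (and only) obstacle is recognizing that this hypothesis is exactly the right quantitative bound on $\tilde c$ relative to the gap $w_1-w_i$, which in turn bounds $w_1-\bar w$ by averaging.
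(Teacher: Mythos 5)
Your proof is correct, and it takes a genuinely different route from the paper. The paper argues by contradiction: it assumes $\mathbf{I}_{1,2}(\tilde\w)=\mathbf{I}_{1,2}(\w)$, which (using $\ell_1$-invariance of the transfer) forces $\|\tilde\w\|_2=\|\w\|_2$, solves the resulting quadratic in $\tilde c$ to pin down the unique nonzero $\tilde c$ achieving equality, and then shows that for this $\tilde c$ one has $\tilde c+\tilde c/(d-1)>w_1-\frac{1}{d-1}\sum_{i=2}^d w_i$, so $\tilde w_1$ would fall below the mean of the other components, contradicting $\tilde w_1=\tilde w_{\max}$. You instead expand $\|\tilde\w\|_2^2-\|\w\|_2^2=-2\tilde c\,(w_1-\bar w)+\tilde c^{\,2}\tfrac{d}{d-1}$ and use the hypothesis $\tilde w_1=\tilde w_{\max}$ directly, via $w_1-w_i\ge \tilde c\,\tfrac{d}{d-1}$ and averaging, to get a strictly negative bound. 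The two arguments share the same core ingredients ($\ell_1$ preservation plus the same $\ell_2$ expansion), but your direct estimate buys something real: it delivers the strict inequality in one step, whereas the paper's contradiction only excludes the equality case and, taken literally, would still need a continuity or monotonicity argument in $\tilde c$ to rule out $\mathbf{I}_{1,2}(\tilde\w)>\mathbf{I}_{1,2}(\w)$; it even gives a quantitative decrease, $\|\w\|_2^2-\|\tilde\w\|_2^2\ge \tilde c^{\,2}\tfrac{d}{d-1}$. As a side note, your coefficient $\tfrac{d}{d-1}=\tfrac{d^2-d}{(d-1)^2}$ on the $\tilde c^{\,2}$ term is the correct one; the paper's displayed $\tfrac{d^2-d+1}{(d-1)^2}$ contains a small arithmetic slip that does not affect its conclusion.
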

\begin{remark}
Ideally, a sparsity metric should decrease if we remove part of the largest component and distribute its value to the remaining components, while ensuring that the largest component remains the largest. This aligns with the property of PQ Index stated in the above theorem. 
Since $\tilde w_1 - \tilde w_i <  w_1 - w_i$ and $\tilde w_i - \tilde w_j =  w_i - w_j$ ($i,j\in\{2,\dots,d\}$), the Gini Index and the Maximum Pairwise Difference also have the above property.
\end{remark}

\vspace{-0.3cm}
\subsection{A Comparison among the Sparsity Measures}
\vspace{-0.2cm}

First, we show the connection among the PQ Index, the Gini Index, and the Maximum Pairwise Difference by the following theorem. 
\begin{theorem}\label{thm_MeasuresBounds}
Let $p=1$ and $q=2$. We have
\begin{align}
Gini(\w) \leq &\ \frac{d}{2\|\w\|_2}MPD(\w),\label{eq_boundGiniMD}\\
MPD(\w) \leq &\ 2\|\w\|_2 \sqrt{2\mathbf{I}_{1, 2}(\w)}, \label{eq_boundMDPQ}\\
 \mathbf{I}_{1, 2}(\w) \leq&\  Gini(\w).\label{eq_boundPQGini}
\end{align}
\end{theorem}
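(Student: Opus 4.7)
The plan is to prove the three inequalities separately, in increasing order of difficulty.

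For (3.5.1), I would observe that each pairwise difference satisfies $|w_i - w_j| \le MPD(\w)$, so $\sum_{i,j}|w_i - w_j| \le d^2\,MPD(\w)$ and $Gini(\w) \le d\,MPD(\w)/(2\|\w\|_1)$. Non-negativity of $\w$ gives $\|\w\|_1^2 = \|\w\|_2^2 + 2\sum_{i<j}w_iw_j \ge \|\w\|_2^2$, hence $\|\w\|_1 \ge \|\w\|_2$, and replacing $\|\w\|_1$ by the smaller $\|\w\|_2$ in the denominator yields the claim.

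For (3.5.2), I would use Theorem 3.3 in its unnormalised form $\|\w - c\one_d\|_2 = \|\w\|_2\sqrt{2\mathbf{I}_{1,2}(\w)}$ with $c = \|\w\|_2/\sqrt{d}$. Picking $i^\star, j^\star$ attaining $MPD(\w)$ and applying the triangle inequality,
\[
MPD(\w) = |w_{i^\star} - w_{j^\star}| \le |w_{i^\star} - c| + |w_{j^\star} - c| \le 2\|\w - c\one_d\|_\infty \le 2\|\w - c\one_d\|_2 = 2\|\w\|_2\sqrt{2\mathbf{I}_{1,2}(\w)}.
\]

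For (3.5.3), I would first rewrite $\mathbf{I}_{1,2}$ using the identity $d\|\w\|_2^2 - \|\w\|_1^2 = \tfrac12\sum_{i,j}(w_i - w_j)^2$ together with the conjugate factorisation $\sqrt{d}\|\w\|_2 - \|\w\|_1 = (d\|\w\|_2^2 - \|\w\|_1^2)/(\sqrt{d}\|\w\|_2 + \|\w\|_1)$, obtaining $\mathbf{I}_{1,2}(\w) = \sum(w_i - w_j)^2 / [2\sqrt{d}\|\w\|_2(\sqrt{d}\|\w\|_2 + \|\w\|_1)]$. After clearing denominators, the desired inequality is equivalent to
\[
\sum_{i,j}(w_i - w_j)^2 \;\le\; \Bigl(\tfrac{\|\w\|_2^2}{\|\w\|_1} + \tfrac{\|\w\|_2}{\sqrt{d}}\Bigr)\sum_{i,j}|w_i - w_j|.
\]
My plan is to attack this via the non-negativity refinement $(w_i - w_j)^2 \le (w_i + w_j)|w_i - w_j|$, so that the left-hand side is dominated by $\sum|w_i - w_j|(w_i + w_j) = 2\sum_i w_i\sum_j|w_i - w_j|$, and then to control this weighted cross-sum through Cauchy--Schwarz in combination with the fundamental bound $\|\w\|_1 \le \sqrt{d}\|\w\|_2$. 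I expect the main obstacle to be pinning down the exact constant at this last step: the coarser pointwise estimate $(w_i - w_j)^2 \le MPD(\w)|w_i - w_j|$ loses too much whenever $\w$ has a dominant component (e.g.\ $\w = (M,\epsilon,\ldots,\epsilon)$), so the factor $(w_i + w_j)$ must be kept and decomposed index by index rather than collapsed to a crude uniform majorant such as $2w_{\max}$. Once the weighted sum is absorbed into $(\|\w\|_2^2/\|\w\|_1 + \|\w\|_2/\sqrt{d})\sum|w_i - w_j|$, the chain of substitutions closes and delivers $\mathbf{I}_{1,2}(\w) \le Gini(\w)$.
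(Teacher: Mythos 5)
Your proofs of \eqref{eq_boundGiniMD} and \eqref{eq_boundMDPQ} are correct and essentially the paper's: bound each $|w_i-w_j|$ by $MPD(\w)$ and use $\|\w\|_2\le\|\w\|_1$ for the first, and combine Theorem~\ref{thm_PQl2Distance} with the triangle inequality for the second (the paper bounds the deviations of $w_{max}$ and $w_{min}$ from $d^{-1/2}$ after normalizing; your $\ell_\infty\le\ell_2$ step is the same estimate). The gap is in \eqref{eq_boundPQGini}. Your reduction to
\begin{equation*}
\sum_{i,j}(w_i-w_j)^2\;\le\;\Bigl(\tfrac{\|\w\|_2^2}{\|\w\|_1}+\tfrac{\|\w\|_2}{\sqrt{d}}\Bigr)\sum_{i,j}|w_i-w_j|
\end{equation*}
is a correct restatement, but the first majorization $(w_i-w_j)^2\le(w_i+w_j)|w_i-w_j|$ already destroys the inequality: the intermediate target
$2\sum_i w_i\sum_j|w_i-w_j|\le\bigl(\|\w\|_2^2/\|\w\|_1+\|\w\|_2/\sqrt{d}\bigr)\sum_{i,j}|w_i-w_j|$
is simply false. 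Take $\w=(1+\epsilon,1,\dots,1)$ with $d\ge3$; e.g.\ $d=3$, $\epsilon=0.1$ gives left side $0.84$ versus right side $\approx0.828$. The reason is structural: near the uniform vector the true left side $\sum(w_i-w_j)^2$ is second order in the perturbation while $\sum(w_i+w_j)|w_i-w_j|$ is first order and saturates the right side at leading order (both are $\approx 2c\sum|w_i-w_j|$ with $c$ the common value), so the comparison is decided by lower-order terms, which go the wrong way ($2+\epsilon$ versus $2+2\epsilon/d$). No application of Cauchy--Schwarz afterward can recover this, because the quantity you propose to bound genuinely exceeds the bound; the loss is not in ``pinning down the constant'' but in the pointwise step itself, precisely in the regime where both $\mathbf{I}_{1,2}$ and $Gini$ are small.

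The paper avoids this by a convexity argument rather than a pointwise one: by scale-invariance restrict to $\|\w\|_1=1$ and sort the components; note that $\mathbf{I}_{1,2}(\w)=1-d^{-1/2}/\|\w\|_2^{-1}$ is a convex function of $\w$ while $Gini(\w)=\frac1d\sum_i(d+1-2i)w_i$ is linear on the sorted simplex; write $\w$ as a convex combination of the vectors $\ddot\w_j=(1/j,\dots,1/j,0,\dots,0)$; and check the inequality only at these extreme points, where $\mathbf{I}_{1,2}(\ddot\w_j)=1-\sqrt{j/d}\le 1-j/d=Gini(\ddot\w_j)$. If you want to salvage an algebraic route from your identity $d\|\w\|_2^2-\|\w\|_1^2=\tfrac12\sum_{i,j}(w_i-w_j)^2$, you would need an estimate that keeps the second-order (quadratic-in-differences) character of the left side instead of converting it to a first-order sum; as written, the plan cannot be completed.
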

\begin{remark}
The above theorem shows that for vectors $\w$ with a fixed $\|\w\|_2$, any one sparsity measure can be bounded in terms of the others. 
We need to fix $\|\w\|_2$ since $MPD(\w)$ is not scale-invariant. Bounding the PQ Index by the Gini Index, or vice versa, does not impose this requirement.
\end{remark}

Second, we analyze the differences between the Maximum Pairwise Difference, PQ Index, and Gini Index. 
The main difference between the Maximum Pairwise Difference and the latter two is that $MPD(\w)$ depends on the largest and the smallest components of $\w$, whereas $\mathbf{I}_{p, q}(\w)$ and $Gini(\w)$ also depend on the values of the other components. 
Specifically, the following theorem holds for PQ Index:

\begin{theorem}\label{thm_PQ_PM_difference}
Denote $\dot\w$ as the $(d-2)$-dimensional vector obtained by removing the components of $\w$ that are equal to $w_{max}$ or $w_{min}$.
Let $\w^{(1)}$ and $\w^{(2)}$ be two $d$-dimensional vectors with the same number of largest components and the same number of smallest components. 
If
\begin{align*}
\mathbf{I}_{p, q}(\w^{(1)})<&\ \mathbf{I}_{p, q}(\w^{(2)}),\\
w^{(1)}_{max}/\|\w^{(1)}\|_q =&\  w^{(2)}_{max}/\|\w^{(2)}\|_q,\\
w^{(1)}_{min}/\|\w^{(1)}\|_q =&\ w^{(2)}_{min}/\|\w^{(2)}\|_q,
\end{align*}
we also have $\mathbf{I}_{p, q}(\dot\w^{(1)})<\mathbf{I}_{p, q}(\dot\w^{(2)})$. 
\end{theorem}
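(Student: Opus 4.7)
The plan is to exploit scale-invariance of the PQ Index, reducing the claim to an equivalence of two $\ell_p^p$ inequalities that differ only by a constant shift. First I would rescale each vector so that $\|\w^{(k)}\|_q = 1$; this leaves $\mathbf{I}_{p,q}$ unchanged and, combined with the matching normalized maxima and minima, forces the actual extreme values to coincide across the two vectors, namely $w^{(1)}_{max} = w^{(2)}_{max} =: \mu$ and $w^{(1)}_{min} = w^{(2)}_{min} =: \nu$.

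Under this normalization the PQ Index collapses to $\mathbf{I}_{p,q}(\w^{(k)}) = 1 - d^{\,1/q - 1/p}\|\w^{(k)}\|_p$, so the hypothesis $\mathbf{I}_{p,q}(\w^{(1)}) < \mathbf{I}_{p,q}(\w^{(2)})$ is equivalent, using $d^{\,1/q-1/p}>0$ and monotonicity of $t \mapsto t^p$ on $\mathbb{R}_+$, to the clean inequality $\|\w^{(1)}\|_p^p > \|\w^{(2)}\|_p^p$.

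Next I would let $n_M, n_m$ denote the common counts of largest and smallest components and set $\dot d := d - n_M - n_m$ for the shared dimension of $\dot\w^{(1)}, \dot\w^{(2)}$. A direct subtraction gives
\begin{align*}
\|\dot\w^{(k)}\|_q^q &= 1 - n_M \mu^q - n_m \nu^q, \\
\|\dot\w^{(k)}\|_p^p &= \|\w^{(k)}\|_p^p - n_M \mu^p - n_m \nu^p,
\end{align*}
so the truncated $q$-norm is a constant $A$ independent of $k$, while the truncated $p$-norms differ between $k=1,2$ only through the shift-invariant quantity $\|\w^{(k)}\|_p^p$. Plugging these into the definition of $\mathbf{I}_{p,q}(\dot\w^{(k)})$, the desired conclusion $\mathbf{I}_{p,q}(\dot\w^{(1)}) < \mathbf{I}_{p,q}(\dot\w^{(2)})$ collapses to the same inequality $\|\w^{(1)}\|_p^p > \|\w^{(2)}\|_p^p$ already established, closing the argument.

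The only delicate point is bookkeeping: the statement writes ``$(d-2)$-dimensional'' as if the max and min were each unique, but the hypothesis that both vectors share the same counts $n_M, n_m$ makes $\dot d$ well-defined and common to both sides, so the reduction goes through in the general case. Beyond that, the argument is a one-line equivalence once scale-invariance has been applied, and I do not anticipate any genuine obstacle.
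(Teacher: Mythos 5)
Your proposal is correct and follows essentially the same route as the paper's proof: normalize by the $q$-norm via scale-invariance, translate the PQ Index hypothesis into $\|\w^{(1)}\|_p^p > \|\w^{(2)}\|_p^p$, subtract the common extreme components, and conclude. You are in fact slightly more careful than the paper, since you explicitly verify that the truncated $q$-norms coincide and handle the multiplicities $n_M, n_m$ of extreme components, steps the paper leaves implicit.
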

\begin{remark}
The above theorem shows that for two unit vectors with the same largest and smallest components, their relative sparsity, as measured by PQ Index, is determined by the values of their remaining components after removing the largest and smallest ones.
In contrast, for two such vectors, their Maximum Pairwise Difference is always the same.
\end{remark}


We also highlight the differences between the PQ Index and the Gini Index. 
We restrict 
$\w$ such that $\|\w\|_1=1$. 
When $w_1\geq\dots\geq w_d$, it can be shown that
$$
Gini(\w) = \frac{1}{d}\sum_{i=1}^d (d+1-2i)w_i.
$$
For a different ordering of the components of $\w$, we replace $(w_1,\dots,w_d)^\T$ in the above formula with 
$\w$ after rearranging its components in decreasing order. 
Therefore, given that $\|\w\|_1=1$, $Gini(\w)$ is a piece-wise linear function of $\w$. In contrast,  $\mathbf{I}_{p, q}(\w)$ is a smooth function.
We visualize $Gini(\w)$ and $\mathbf{I}_{1, 2}(\w)$ in Figure~\ref{fig_3dPQGini} for $d=3$. As highlighted in Remark~\ref{remark:1} and~\ref{remark:2}, when the sparsity of the vector reaches its maximum (i.e., a one-hot vector), the MPD also attains its maximum. Conversely, a uniform vector minimizes both sparsity and MPD. These relationships illustrate how sparsity reflects group-level disparities, thereby supporting its role as a fairness-relevant measure.

\begin{wrapfigure}{R}{0.5\linewidth}
\centering
    \includegraphics[width=\linewidth]{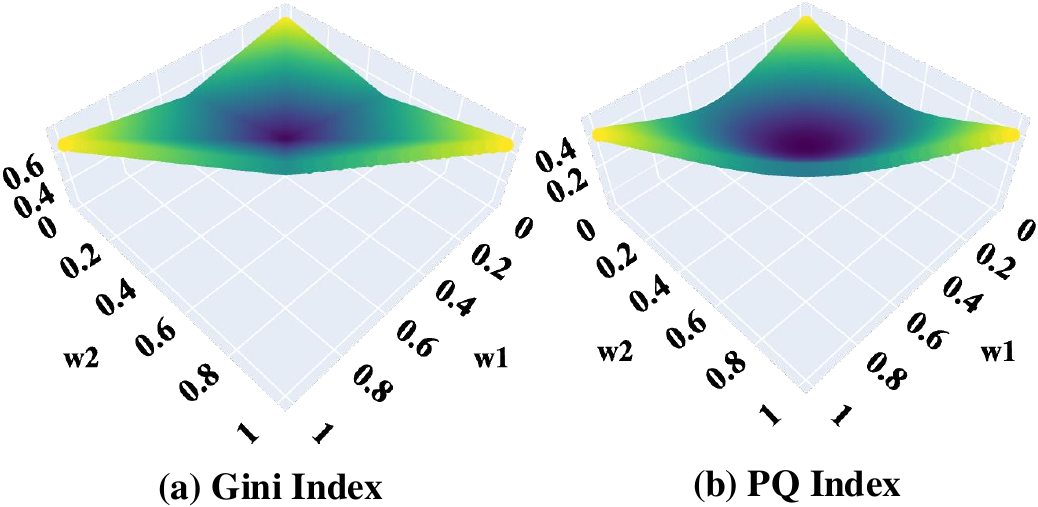}
\vspace{-0.5cm}
\caption{The plots of (a) $Gini(\w)$ and (b) $\mathbf{I}_{1, 2}(\w)$ for $d=3$ and $\|\w\|_1=1$. In each plot, the horizontal axes correspond to $w_1$ and $w_2$ (where $w_3=1-w_1-w_2$). The vertical axis shows 
the value of Gini Index or PQ Index. Since there are 6 possible permutations of 
 $[w_1, w_2, w_3]$, $Gini(\w)$ is composed of subsets of 6 distinct planes. In contrast, $\mathbf{I}_{1, 2}(\w)$ has a smooth surface. Both Gini Index and PQ Index attain their minimum at $\w=[3^{-1},3^{-1},3^{-1}]^\T$.
}\label{fig_3dPQGini}
\vspace{-0.5cm}
\end{wrapfigure}

\vspace{-0.3cm}
\section{Unifying Group Fairness with Sparsity} \label{sec:notion}
\vspace{-0.2cm}
In this section, we formulate a unified fairness framework based on the idea that sparsity is the inverse of fairness. In general, we replace the Maximum Pairwise Difference used in existing fairness metrics with a sparsity measure over $\w$, where the length of the vector $\w$ equals the number of sensitive groups in the input. 
Denote the input vector as $X\in\mathcal X$, the target vector as $Y\in\mathcal Y$, and the sensitive attribute vector as $A\in\mathcal A$, where $A$ may or may not be a subset of $X$. 
Let $X_a$ and $Y_a$ be the data points belonging to a subgroup $a \in \mathcal{A}$. 
Let $|\mathcal Y|$ and $|\mathcal A|$ be the cardinalities of $Y$ and $A$, respectively.
For a function $f: X\mapsto f(X)$, let $S:f(X)\mapsto S(f(X))$ be any sparsity measure imposed on $f$, and $g: f(X)\mapsto g(f(X))$ be a model performance evaluation metric based on, e.g., the \textit{Confusion Matrix} for classification or \textit{Mean Squared Error} for regression.

We denote the sparsity based metrics in the form $S\mh*$, where``$*$'' is a placeholder for an existing fairness criterion,  such as statistical parity. 
Let $ \boldsymbol{a}$ be a vector with components $a_i \in \mathcal{A}$ $(i=1,\dots,|\mathcal{A}|)$. Let $m_i\in\R$ represent outputs from a function that depends on the index $i$. Specifically, $m_i$ can be $f(X_{a_i})$ or $g(f(X_{a_i}), Y_{a_i})$. 
Define $[m_i]_{i=1}^{|\mathcal{A}|} \de [m_1,\dots,m_{|\mathcal{A}|}]^\T$. Then, $S\mh*$ can be expressed as $S([m_i]_{i=1}^{|\mathcal{A}|})$. We summarize the criteria discussed in this paper in Table \ref{tab:fair_notion}.

\begin{table*}[ht!]
\centering
\vspace{-0.3cm}
\caption{Group fairness criteria discussed in the main text. For classification, we focus on Statistical Parity and Equalized Odds. For regression, we use Statistical Parity based on Kolmogorov-Smirnov (KS) distance and propose EO definition in MPD form. We then reformulated these criteria incorporating sparsity measure $S(\cdot)$. Fairness criteria proposed in this work are highlighted in gray, with details in Section \ref{sec:notion}. A complete table of criteria can be found in Appendix \ref{tab:fair_criteria_all}}
\resizebox{\textwidth}{!}{%
\footnotesize
\begin{tabularx}{\linewidth}{@{}cc>{\centering\arraybackslash}X}
\toprule
\textbf{Problem}                 & \textbf{Criteria}                               & \textbf{Expression} \\ \midrule
\multirow{6}{*}{Classification}  & Statistical Parity      &  $\max_{y \in \mathcal{Y}} \max_{a, a' \in \mathcal{A}} \Big| \mathbb{E}(f(X_{a}) = y) - \mathbb{E}(f(X_{a'}) = y) \Big|$                   \\ 
                                 & \cellcolor{gray!15}$S\mh$Statistical Parity                  &  $\max_{y\in\mathcal{Y}}S\bigl(\mathbb{E}(f(X_{a_i}) = y)_{i=1}^{|\mathcal{A}|}\bigr)$              \\ 

                                 \cmidrule(lr){2-3}
                                 & Equalized Odds          &  $\max_{y,y'\in\mathcal Y}\max_{a, a' \in \mathcal{A}} \Big| \mathbb{P}_{y',a}(f(X) = y) -  
    \mathbb{P}_{y',a'}(f(X) = y) \Big|$   \\ 
                                 & \cellcolor{gray!15}$S\mh$Equalized Odds                       &  $\max_{y\in\mathcal{Y}}S\bigl([g(f(X_{a_i}), Y_{a_i})]_{i=1}^{|\mathcal{A}|}\bigr)$                   \\ 
                                 \midrule
\multirow{6}{*}{Regression} & Statistical Parity &   $\sup_{y \in \mathcal Y} \max_{a, a' \in \mathcal{A}}   \Big| \mathbb{P}_a(f(X) \leq y) - \mathbb{P}_{a'}(f(X) \leq y) \Big|$                  \\ 
                                 & \cellcolor{gray!15}$S\mh$Statistical Parity              &   $\sup_{y \in \mathcal Y} S\bigl([\mathbb{P}_{a_i}(f(X) \leq y )]_{i=1}^{|\mathcal{A}|}\bigr)$                  \\ 
                                \cmidrule(lr){2-3}
                                 & \cellcolor{gray!15} Equalized Odds                        &   $\max_{a, a' \in \mathcal{A}}\Big|g(f(X_{a}), Y_{a}) - g(f(X_{a'}), Y_{a'})\Big|$   \\
                                 & \cellcolor{gray!15}$S\mh$Equalized Odds                        &   $S\bigl([g(f(X_{a_i}), Y_{a_i})]_{i=1}^{|\mathcal{A}|}\bigr)$    \\ 
                                 \bottomrule
\end{tabularx}
}
\label{tab:fair_notion}
\end{table*}

\vspace{-0.3cm}
\subsection{Statistical Parity (SP)}\label{sec:sp}
\vspace{-0.1cm}
Statistical Parity (\textit{a.k.a. Demographic Parity}) assesses whether the predicted outcome of a model is independent of sensitive attributes (e.g., race, gender, or education). Enforcing statistical parity ensures that the likelihood of a specific model outcome is equal across different sensitive groups, regardless of group membership. 
\paragraph{Classification.}\label{sec:clf_sp} Statistical Parity has been used extensively in classification problems to quantify algorithmic fairness for classification \cite{calder2009constraint}. Although its oracle form is proposed for binary classification problems, recent work \cite{alghamdi2022beyond, xian2023fair, denis2024fairness} has advanced its usage to multi-classification problems.
\begin{definition}[Statistical Parity (\textit{classification})]\label{def:sp_original}  
A classifier $f \colon \mathcal{X} \rightarrow \mathcal{Y}$ satisfies SP if the following quantity is equal to 0:  
\begin{equation*}
\max_{y \in \mathcal{Y}} \max_{a, a' \in \mathcal{A}} \Big| \mathbb{E}(f(X_{a}) = y) - \mathbb{E}(f(X_{a'}) = y) \Big|.
\end{equation*}  
The Maximum Pairwise Difference (Definition \ref{def:mpd}) is calculated among group-wise outputs, and then the maximum value is taken over all classes. 
\end{definition}

\begin{definition}[$S_c$-Statistical Parity] 
The sparsity-based statistical parity is measured by
\begin{equation*}\max_{y\in\mathcal{Y}}S\bigl([\mathbb{E}(f(X_{a_i}) = y)]_{i=1}^{|\mathcal{A}|}\bigr).
\end{equation*}
If $S(\cdot)$ in the above expression is the Maximum Pairwise Difference, the classifier reduces to Definition~\ref{def:sp_original}.  
The suffix ``c'' stands for classification. One may also consider replacing the \texttt{max} operation over multiple classes with other measures, such as \texttt{mean} or \texttt{sum}. (See Appendix~\ref{append:mlclass}) 
\end{definition}

\paragraph{Regression.} In the context of fair regression, the following definition of (strong) statistical parity has been used frequently in the literature \citep{agarwal2019fair, jiang2020wasserstein, silvia2020general, chzhen2020fair}.
For regression models, we assume that $\mathcal{Y} \subseteq \mathbb{R}$.
\begin{definition}[Statistical Parity (\textit{regression})]\label{def:spr_origin}
A regression model $f \colon \mathcal{X} \rightarrow \mathcal Y$ is considered to satisfy SP if the following quantity is equal to 0:
\begin{equation*}
\begin{aligned}
\sup_{y \in \mathcal Y} \max_{a, a' \in \mathcal{A}}  \Big| \mathbb{P}_a(f(X) \leq y) - \mathbb{P}_{a'}(f(X) \leq y) \Big|,
\end{aligned}
\vspace{-0.5mm}
\end{equation*}
where $\mathbb{P}_a(\cdot)$ denotes the probability conditional on $A = a$. Here, the difference between $\mathbb{P}_a(f(X) \leq y)$ and $\mathbb{P}_{a'}(f(X) \leq y)$ is measured using the Kolmogorov-Smirnov distance \citep{lehmann2006testing}. 

The above is considered a stronger fairness criterion than general statistical parity, since it accounts for the entire shape of the distribution \citep{silvia2020general}, ensuring that the distributions remain similar across different groups. 
\end{definition}
\begin{definition}[$S_r\mh$Statistical Parity]
   The sparsity-based statistical parity is measured by
    \begin{equation}
        \sup_{y \in \mathcal Y} S\bigl([\mathbb{P}_{a_i}(f(X) \leq y )]_{i=1}^{|\mathcal{A}|}\bigr) 
    \end{equation}
This definition borrows the idea from the Kolmogorov-Smirnov (KS) distance and finds the maximum sparsity among group CDFs. If $S(\cdot)$ is taken to be the Maximum Pairwise Difference, the above expression reduces to Definition~\ref{def:spr_origin}.
    
In practice, the closed form of each conditional CDF is often unknown.
To address this issue, we may approximate them with empirical cumulative distribution functions.
\end{definition}

\vspace{-0.3cm}
\subsection{Equalized Odds (EO)}
\vspace{-0.1cm}
\citet{hardt2016equality} introduced the concept of Equalized Odds (EO) that incorporates the distribution of the ground truth label by enforcing $f(X) \perp A \mid Y$, where $\perp$ denotes the independence between two random variables.
Consequently, when the sensitive attribute $A$ is related to the ground truth label $Y$, EO requires that the predictions $f(X)$ reveal no additional information about $A$ beyond what is already contained in $Y$~\citep{woodworth2017learning}. 
\vspace{-0.2cm}
\paragraph{Classification.}  In classification, EO measures fairness from a different perspective. Compared with SP, it has the following two key differences \citep{dwork2012fairness, agarwal2018reductions}: \textbf{1)} A classifier can achieve a low SP score by matching $\mathbb{P}(f(X) = 1)$ across groups, even if it makes accurate predictions for the majority group of $A$ while producing random predictions for the others. \textbf{2)} A perfect classifier may violate SP if $Y$ is dependent on $A$.
\begin{definition}[Equalized Odds]\label{def:eo_c}
For a  classifier $f \colon \mathcal{X} \rightarrow \mathcal{Y}$, equalized odds~\citep{hardt2016equality, xian2024optimal} 
considers
\begin{equation*}
\begin{aligned}
    \max_{y,y'\in\mathcal Y}\max_{a, a' \in \mathcal{A}} \Big| \mathbb{P}_{y',a}(f(X) = y) -  
    \mathbb{P}_{y',a'}(f(X) = y) \Big|,
\end{aligned}
\end{equation*}
where $\mathbb{P}_{y,a}(f(X) = y)$ denotes $\mathbb{P}(f(X) = y \mid Y = y, A = a)$.
In multi-class scenarios, $y$ and $y'$ are vectors of class labels.
\end{definition}

Next, we propose a more general definition for EO by incorporating sparsity measures. 
Let $g: (Y, f(X))\mapsto g(Y, f(X))\in \mathbb R$ be an arbitrary model performance evaluation metric. For example, $g(\cdot)$ can be the accuracy of the model or the model loss such as Cross Entropy (CE) Loss.
\begin{definition}[$S_c\mh$Equalized Odds]\label{def_ScEO}   
The sparsity-based equalized odds for classifiers considers:
\begin{equation}\label{eq_ScEO}\max_{y\in\mathcal{Y}}S\bigl([g(f(X_{a_i}), Y_{a_i})]_{i=1}^{|\mathcal{A}|}\bigr). 
\end{equation}

\end{definition}
In multi-class classification, we evaluate \eqref{eq_ScEO} for each class separately and take the maximum value. Following Definition~\ref{def:eo_c} and previous work \cite{alghamdi2022beyond}, we define $g(\cdot)$ as the average of the True Positive Rate (TPR) and False Positive Rate (FPR).
\vspace{-0.3cm}

\paragraph{Regression.} To the best of our knowledge, there is no existing fairness criterion similar to the formulation of EO in regression problems. We define EO in regression as follows for completeness:
\begin{definition}[Equalized Odds (\textit{regression})]
    For regression data $(X_{a_i}, Y_{a_i})$ in each group, the EO can be expressed as 
    \begin{equation*}\max_{a, a' \in \mathcal{A}}\Big|g(f(X_{a}), Y_{a}) - g(f(X_{a'}), Y_{a'})\Big|.
\end{equation*}
\end{definition}
This naturally extends to $S\mh$EO by incorporating sparsity measures.
\begin{definition}[$S_r\mh$Equalized Odds]
The sparsity-based equalized odds for regression models considers
\begin{equation*}
    S\bigl([g(f(X_{a_i}), Y_{a_i})]_{i=1}^{|\mathcal{A}|}\bigr),
\end{equation*}
where $g: (Y, f(X))\mapsto g(Y, f(X))\in \mathbb R$ is model performance evaluation metric for regression models. The function $g(\cdot)$ can be the \textit{Mean Squared Error (MSE)} or, when additional information about the distribution of $Y$ is available, the log-likelihood.
\end{definition}

\vspace{-0.3cm}
\section{Experiments}\label{sec:exp}
\vspace{-0.1cm}

\subsection{Experimental Setup}
\vspace{-0.2cm}
In this section, we conduct experiments to validate our proposed criteria in comparisons with other established fairness notions and evaluate them across different bias mitigation algorithms. We aim to address the following research questions: 
\vspace{-0.2cm}
\begin{itemize}
    \item \textbf{Q1:} Do sparsity-based metrics align with MPD-based metrics across different benchmarks?
    \item \textbf{Q2:} In which scenarios do the two evaluation frameworks exhibit divergent behaviors? 
\end{itemize}
\vspace{-0.2cm}
We apply the PQ Index ($p=1, q=2$) \citep{diao2023pruning} as the sparsity measure $S(\cdot)$ for all the primary results.
On each of the problem and dataset, we evaluate bias mitigation algorithms using our proposed criteria and compare the results against the existing criteria. Following previous practice \citep{agarwal2018reductions, wei2021optimized, alghamdi2022beyond}, we include the sensitive attribute $A$ in the input $X$ for consistent comparisons in all of our experiments, except for the simulated data in the regression setting. The detailed configurations are provided in Appendix \ref{appendix:exp}.
\vspace{-0.2cm}
\paragraph{Datasets.}
For the classification task, we follow prior work in fair classification \citep{agarwal2018reductions, jaewoong2020kde, jeong2022fairness, alghamdi2022beyond, xian2023fair} and include datasets such as \textit{UCI Adult}, \textit{COMPAS}, \textit{HSLS}, \textit{ACSIncome}, and \textit{Enem}. And for regression, we consider \textit{Communities \& Crimes} and \textit{LawSchool}, which have been commonly used as benchmarks in fair regression studies \citep{agarwal2019fair, chzhen2020fair, xian2024differentially}. Details on each dataset can be found in Appendix~\ref{appendix:data}. 
\vspace{-0.2cm}
\vspace{-0.1cm}
\paragraph{Baselines.} For the \textbf{classification} setting, we include the following bias mitigation algorithms: \textit{Reweight} \citep{kamiran2012data}, \textit{FairRR} \citep{zeng2024fairrr},  \textit{Reduction} \citep{agarwal2018reductions}, \textit{Rejection} \citep{kamiran2012decision}, \textit{EqOdds} \citep{hardt2016equality}, \textit{CalEqOdds} \citep{pleiss2017fairness}, \textit{FairProj} \citep{alghamdi2022beyond} and \textit{LinearPost} \citep{xian2023fair}. Among those existing algorithms, only \textit{FairProj}  and \textit{LinearPost} are capable of handling multi-classification problem. For \textit{FairProj}, we test both Kullback–Leibler divergence (KL) and Cross Entropy (CE) as the divergence measure used in the algorithm.

In contrast to classification, fairness in \textbf{regression} has received relatively less attention. In our benchmark experiments, we evaluate three representative bias mitigation algorithms for regression: \textit{FairReg} \citep{agarwal2019fair}, \textit{WassBC} \citep{chzhen2020fair}, \textit{LinearPost} \citep{xian2024differentially}. We include details on the hyperparameter selections in Appendix~\ref{appendix:exp}.

Most of these algorithms belong to the post-processing category, except for \textit{Reduction} and \textit{FairReg}, which are in-processing methods. A logistic regression model is used as the base model for classification benchmarks, while a linear regression model is used for regression. 
\vspace{-0.3cm}
\subsection{Experimental Results}
\vspace{-0.2cm}
\begin{figure*}[!ht]
    \centering
    \includegraphics[width=\linewidth]{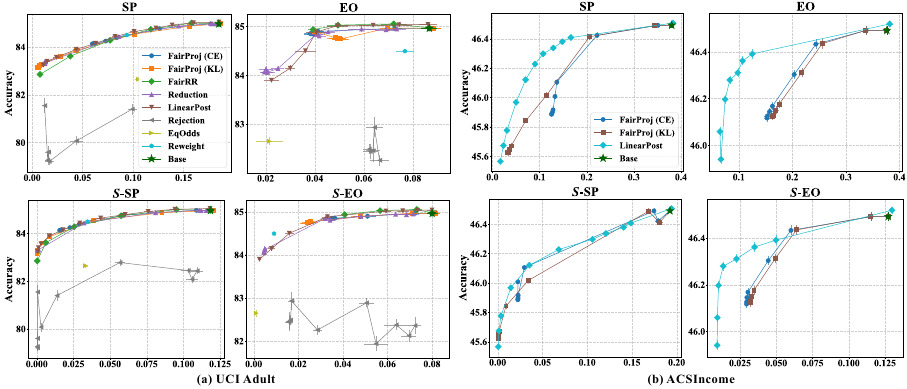}
    \vspace{-0.7cm}
    \caption{Comparison of sparsity criteria with baseline criteria in two classification dataset. The top row shows results from baseline criteria; the bottom row shows results from the proposed sparsity criteria. The x-axis of each plot represents the value of various criteria.}
    \label{fig:cls_result}
\end{figure*}
\vspace{-3mm}
\begin{wrapfigure}{l}{0.5\linewidth}
\centering
\vspace{-5mm}
\includegraphics[width=\linewidth]{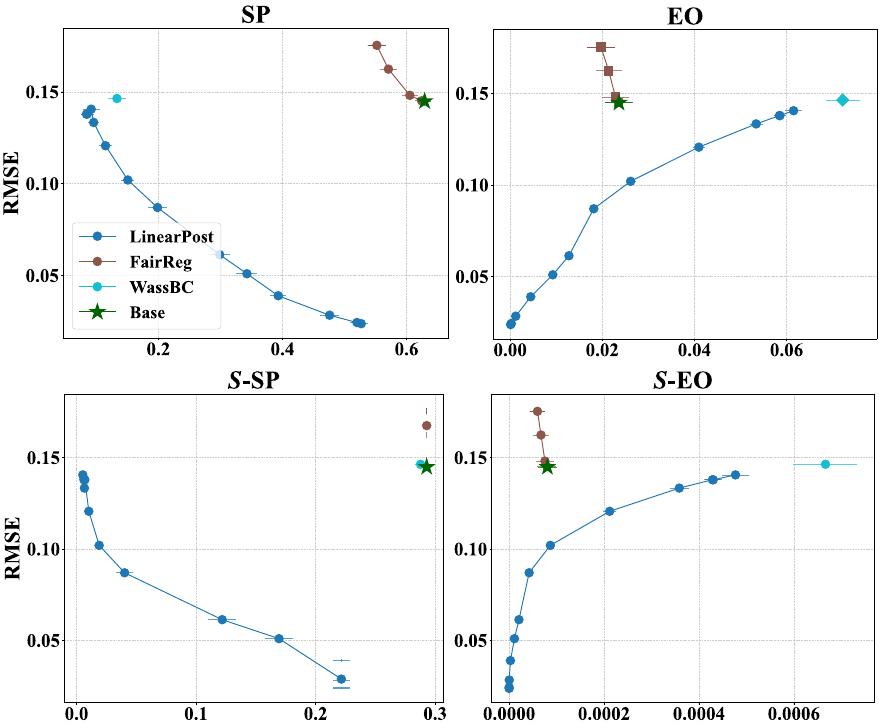}
\vspace{-0.7cm}
    \caption{Comparison in \textit{Community \& Crimes}.}
    \label{fig:reg_result}
    \vspace{-5mm}
\end{wrapfigure}

Figure~\ref{fig:cls_result} illustrates the comparison between the proposed fairness criteria ($S\mh*$) and their existing counterparts for two classification datasets: \textit{UCI Adult} ($|\mathcal{Y}|=2$, $|\mathcal{A}|=2$) and \textit{ACSIncome} ($|\mathcal{Y}|=5$, $|\mathcal{A}|=5$). For methods that generate fairness-accuracy trade-off curves, we select a range of fairness budgets corresponding to the respective fairness criteria to illustrate their effects.
The results reveal similar trends in the trade-off curves between the $S\mh*$ metrics and existing fairness metrics across the benchmark methods. These observations hold for both binary-class/binary-group problems and multi-class/multi-group problems.

Furthermore, the proposed notions do not alter the trade-off patterns. For instance, the \textit{LinearPost} algorithm achieves the best trade-off curves for SP and EO, and the same trend is observed for $S\mh$SP and $S\mh$EO.
These findings suggest that in classification tasks, sparsity-based metrics effectively capture the characteristics of the original fairness criteria while ensuring equal consideration for all groups within the sensitive attribute. Notably, sparsity-based criteria sometimes yield different trade-offs for a method, such as \textit{Rejection} for EO in Figure~\ref{fig:cls_result}(a), leading to a more intuitive curve. Consequently, $S\mh*$ represent a valuable alternative for measuring fairness in classification problems.

In Figure~\ref{fig:reg_result}, we demonstrate similar comparisons for regression dataset \textit{Communities \& Crime} ($|\mathcal{A}|=2$). In this data set, we observe that $S\mh$SP and $S\mh$EO resemble the trade-offs of the benchmark algorithms compared to the MPD versions. Across experiments, we see that \textit{FairReg}, as an in-processing reduction algorithm, always reduces to the unconstrained case. Instead, \textit{LinearPost} and \textit{WassBC} directly search for the Bayes Optimal regressor under the fairness constraint through post-processing and disregard the unconstrained model fit.
While these methods are not deliberately designed for mitigating EO violations in regression, \textit{LinearPost} successfully produces points achieving the desired \textit{RMSE} and $S\mh$EO values. Recall in Section \ref{sec:sparsity} that the elements in $\w$ are required to be non-negative. However, quantities passed into the sparsity measure may contain negative, zero, or extreme small values, which cause instability in the fairness criteria. We adopt an exponential transformation to enforce the positivity and show the result in $S\mh$EO in Figure~\ref{fig:reg_result}. We further examine the exponential transformation under different settings in Appendix~\ref{ablat:pos}.

\vspace{-0.3cm}
\subsection{Intersectional Fairness}
\vspace{-0.20cm}
Intersectional fairness considers multiple sensitive attributes at the same time \citep{crenshaw2013demarginalizing}, whereas most prior research on group fairness has focused on a single dimension of group identity \citep{yang2020fairness}. In this section, we target a common scenario in intersectional fairness where the number of sensitive groups becomes large. Using both simulated data and real-world example (\textit{Adult}), we observe similarities and differences for MPD and $S\mh*$ metrics. For the simulated binary classification dataset, we interpolate the class weight by adjusting the available training samples for each group and fix the maximum class weight difference as the group size increases. To achieve large sensitive groups in the \textit{Adult} dataset, we utilize intersections of gender, race and descretized age. (See Appendix~\ref{appendix:data}). Experiments are conducted with three random data splits. 
\begin{wrapfigure}{r}{0.6\linewidth}
    \centering
    \vspace{-3mm}
    \includegraphics[width=\linewidth]{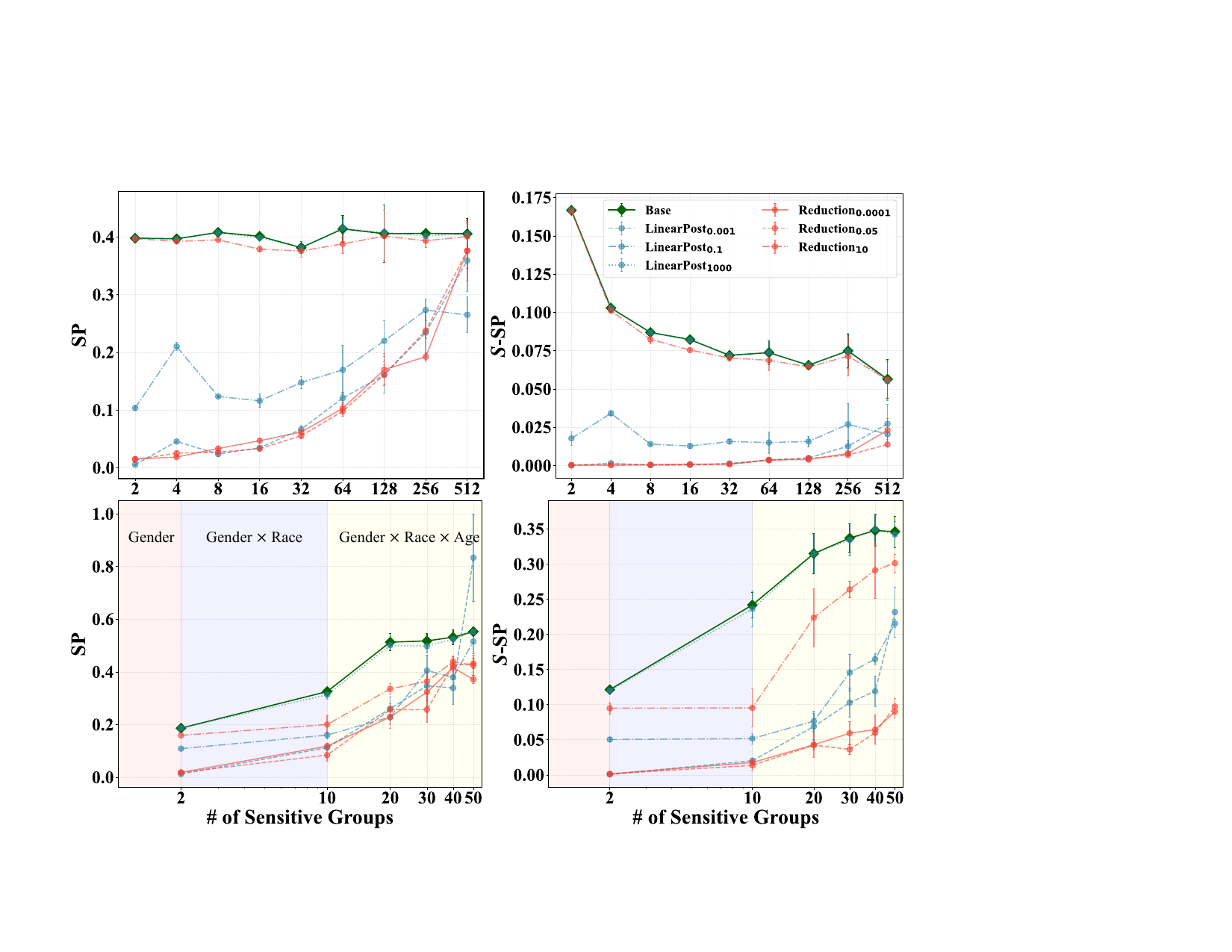}
    \vspace{-0.7cm}
    \caption{\textbf{Top}: \textit{Simulated} binary classfication dataset. \textbf{Bottom}: \textit{Adult} dataset. Legend indicates bias mitigation methods, with subscripts denoting hyperparameter.}
    \vspace{-4mm}
    \label{fig:group-size}
\end{wrapfigure}

\vspace{-4mm}
For the unconstrained model (\textit{Base}), we observe that the MPD-based SP remains at the same level regardless of the group size in the simulated setting, since it only reflects the maximum group disparity. In contrast, $S\mh$SP captures the addition of groups with class imbalances smaller than the maximum difference. As the group size increases, $S\mh$SP from the \textit{Base} model decreases due to the presence of more intermediate groups, which dilutes the class distribution across the dataset and makes it less sparse. The results align with the findings of Theorem~\ref{thm_PQ_PM_difference}, which states the relative sparsity of two vectors is determined by the rest of the components if they have the same maximum and minimum components. 

Furthermore, in the simulated setting, we observe that existing bias mitigation algorithms exhibit inconsistent debiasing performance for SP as the group size increases, while they remain effective for $S\mh$SP across various sensitive group sizes. This discrepancy appears to stem from the algorithms successfully balancing predictions for most groups but failing for a few under large number of groups. It leads to substantial effect on SP but only a minimal impact on the sparsity-based measure. 

In the multi-group \textit{Adult} dataset results, we observe that both the SP and the $S\mh$SP values increase as the grouping granularity increases. Specifically, when the number of groups becomes large, edge cases may arise where one class is entirely absent within certain groups. In such cases, SP can produce extreme values (e.g., the result of \textit{LinearPost}$_{0.001}$ at a group size of 50), whereas $S\mh$SP provides a more stable evaluation by incorporating group distribution through sparsity. Across both experiments, we demonstrate that the sparsity-based metric captures subtle group disparities overlooked by MPD and exhibits greater robustness under severe class imbalance within groups.
\vspace{-0.4cm}
\section{Conclusion}
\vspace{-0.25cm}
In this paper, we propose to unify various fairness criteria in machine learning with a sparsity measure. We highlight that sparsity, inherently designed as a measure of inequality, can also serve as a viable definition of fairness. Our work provides deeper insight into the properties of the PQ Index and a theoretical comparison of MPD, the Gini Index, and the PQ Index. Building on this foundation, we propose a unified framework that incorporates sparsity measures into fairness criteria such as statistical parity (SP) and equalized odds (EO). Through comprehensive benchmarking across multiple datasets and bias mitigation algorithms, we demonstrate that the proposed framework aligns well with the state-of-the-art approaches. Future research directions include developing fair algorithms that utilize PQI or other sparsity measures for bias mitigation.


\section*{Ethics Statement}
We acknowledge several important ethical concerns related to fairness research in machine learning.

\paragraph{Dataset limitations} We recognize the limitations of commonly used benchmark datasets, may include outdated labels ( i.e. income information in Adult), inherent biases in data collection (racial bias in COMPAS), which limits their ability to fully reflect real-world decision-making contexts. While such datasets are widely used for comparative analysis, we caution against interpreting empirical results in isolation from these known limitations. Throughout this paper, we focus on illustrating the behavior of different fairness metrics, rather than promoting specific deployment-ready solutions.
\paragraph{Broader society impact} Beyond technical contributions, our work has potential societal impact by promoting fairness measures that are more aligned with social equity principles. By connecting perspectives from the social sciences with algorithmic fairness, we aim to support the development of more inclusive and responsible AI systems that better serve diverse populations. However, we are aware that sparsity-based fairness measures, like any fairness criterion, must be applied with context. In particular, optimizing for sparsity may carry the risk of underrepresenting smaller or marginalized subgroups if applied without careful consideration. Our framework is designed to surface structural relationships between fairness metrics, not to prescribe a universal approach. There is a risk that over-reliance on a single measure, such as sparsity, may oversimplify complex social dynamics or overlook harms not captured by the chosen formalism. We emphasize that any fairness intervention should be guided by domain knowledge, stakeholder input, and sensitivity to the social and legal implications in deployment scenarios.
\paragraph{Access to sensitive features} Our experimental framework assumes access to demographic attributes (e.g., race, gender) for the purpose of fairness evaluation. In real-world systems, however, such sensitive attributes are often unavailable due to legal restrictions, data privacy concerns, or lack of user consent. We acknowledge that collecting and using such data requires careful attention to regulatory frameworks (e.g., GDPR), informed consent, and community norms. We do not advocate for indiscriminate collection of sensitive attributes, and we recognize the risk that their misuse can exacerbate harms rather than mitigate them. In settings where sensitive attributes are not accessible, fairness evaluation may need to rely on proxy features, post-hoc audits, or participatory methods involving stakeholders. We encourage future work to explore fairness-aware learning under limited or uncertain demographic information, and engage with communities impacted by algorithmic decision-making.

\section*{Reproducibility Statement}

We provide the complete source code in the supplementary materials. Further details on the theoretical analysis, experimental setup including hyperparameters and datasets, and results are documented in the Appendix.

\bibliography{iclr2026_conference}
\bibliographystyle{iclr2026_conference}

\newpage
\centerline{\LARGE Appendix}
\appendix
\section{Discussion}
\subsection{Limitations}
Although our theoretical analysis and empirical results suggest that sparsity possesses desirable properties for measuring fairness in machine learning and aligns well with current algorithmic fairness research, several limitations warrant further discussion.

First, while our work primarily focuses on the technical alignment between fairness and sparsity, its broader applicability to AI or social fairness remains to be explored. A more comprehensive evaluation is needed to identify practical scenarios where the $S\mh*$ metric may be better suited than MPD-based metrics.

Second, as noted in the main text, sparsity-based metrics may introduce numerical instability compared to MPD-based measures. While we employ an exponential transformation to mitigate this issue, alternatives beyond heuristic approaches require further investigation.

\subsection{Broader Impact}
This work draws inspiration from the Gini Index, a well-established measure in the social sciences, to bridge algorithmic fairness with broader notions of equity observed under real-world contexts. By grounding our approach in sparsity, we offer a norm-based fairness evaluation framework that is not only interpretable but also directly optimizable. Unlike previous approaches that rely on indirect surrogates such as mutual information due to the non-optimizable nature of MPD, our formulation enables straightforward integration of norm-based regularization into learning objectives. 

\subsection{Use of Large Language Models}
\label{appendix:llm-usage}

In this work, we used large language models (LLMs) to assist with manuscript editing. LLMs were used to help polish the language of the manuscript. This includes surface-level edits such as improving clarity, grammar, and conciseness of English expressions. All technical content, algorithmic designs, and empirical results were authored and validated by the authors. No part of the scientific contributions was generated by or delegated to an LLM.


\section{Theoretical Analysis}
\subsection{Ideal Properties for Sparsity Measures}\label{subsec_SixProperties}

\citet{hurley2009comparing} have outlined the following desirable properties for sparsity measures, which were originally introduced in the works of \citet{dalton1920measurement, rickard2004gini}:
\begin{description}
\item[(D1)] Robin Hood: For any $w_i>w_j$ $(i,j\in\{1,\dots,d\})$ and $\alpha \in\left(0,\left(w_i-w_j\right) / 2\right)$, we have $$S\left(\left[w_1, \ldots, w_i-\alpha, \ldots, w_j+\alpha, \ldots, w_d\right]^\T\right)<S(\w).$$
\item[(D2)] Scaling: $S(\alpha \w)=S(\w)$ for any $\alpha>0$.
\item[(D3)] Rising Tide: $S(\w+\alpha)<S(\w)$ for any $\alpha>0$ and $w_i$ not all the same.
\item[(D4)] Cloning: $S(\w)=S([\w^\T, \w^\T]^\T)$.
\item[(P1)] Bill Gates: For any $i=1, \ldots, d$, there exists $\beta_i>0$ such that for any $\alpha>0$ we have
\begin{align*}
S\left(\left[w_1, \ldots, w_i+\beta_i+\alpha, \ldots, w_d\right]^\T\right)> S\left(\left[w_1, \ldots, w_i+\beta_i, \ldots, w_d\right]^\T\right).
\end{align*}
\item[(P2)] Babies: $S\left(\left[\w^\T, 0\right]^\T\right)>S(\w)$ for any non-zero~$\w$.
\end{description}
\citet{hurley2009comparing} showed that the Gini Index satisfies the aforementioned six properties, and \citet{diao2023pruning} proved that the same holds for the PQ Index as well.
For the Maximum Pairwise Difference, only properties \textbf{(D4)} and \textbf{(P1)} are satisfied. The above results are summarized in Table~\ref{tab_SixProperties}, and the explanations for the Maximum Pairwise Difference are as follows:
\begin{enumerate}
\item For \textbf{(D1)}, when $w_i<w_{max}$ and $w_j>w_{min}$ (recall that $w_{max}$ and $w_{min}$ are the values of the largest and smallest components of $\w$, respectively), we have 
$$MPD\left(\left[w_1, \ldots, w_i-\alpha, \ldots, w_j+\alpha, \ldots, w_d\right]^\T\right)=MPD(\w).$$
Therefore, Robin Hood does not hold for the Maximum Pairwise Difference. 
\item For \textbf{(D2)}, 
$$
MPD(\alpha \w)=\alpha MPD(\w),\ \forall \alpha>0.
$$
Therefore, Scaling does not hold for the Maximum Pairwise Difference. 
\item For \textbf{(D3)}, 
$$
MPD(\w + \alpha)=\max_{i,j\in\{1,\dots,d\}}|w_{i}+ \alpha - w_{j} - \alpha| = MPD(\w),\ \forall \alpha>0.
$$
Therefore, Rising Tide does not hold for the Maximum Pairwise Difference.
\item  For \textbf{(D4)}, 
since the largest components of $\w$ equals the ones of $[\w^\T, \w^\T]^\T$, and the same holds for their smallest components,
$$
MPD(\w)=MPD([\w^\T, \w^\T]^\T).
$$
Therefore, Cloning holds for the Maximum Pairwise Difference.
\item For \textbf{(P1)}, we may take $\beta_i = w_{max} - w_i + 1$. Then, 
$$
MPD\left(\left[w_1, \ldots, w_i+\beta_i+\alpha, \ldots, w_d\right]^\T\right)=\alpha + 1 + w_{max}-w_{min}. 
$$
Since
$$
MPD\left(\left[w_1, \ldots, w_i+\beta_i, \ldots, w_d\right]^\T\right)=  1 + w_{max} - w_{min},
$$
we have that Bill Gates holds for the Maximum Pairwise Difference. 
\item For \textbf{(P2)}, when one of the components of $\w$ is 0, we have 
$$MPD\left(\left[\w^\T, 0\right]^\T\right)=MPD(\w).$$ Therefore, Babies does not hold for the Maximum Pairwise Difference. 
\end{enumerate}

\begin{table}[!ht]
\centering
\resizebox{\textwidth}{!}{%
\begin{tabular}{@{}lllllll@{}}
\toprule
                           & \textbf{(D1)} Robin Hood & \textbf{(D2)} Scaling &\textbf{(D3)} Rising Tide & \textbf{(D4)} Cloning &\textbf{(P1)} Bill Gates & \textbf{(P2)} Babies \\ \midrule
$\mathbf{I}_{p, q}(\w)$                & \CheckmarkBold         & \CheckmarkBold       & \CheckmarkBold           & \CheckmarkBold       & \CheckmarkBold          & \CheckmarkBold      \\
$Gini(\w)$                & \CheckmarkBold          & \CheckmarkBold       & \CheckmarkBold           & \CheckmarkBold       & \CheckmarkBold          & \CheckmarkBold      \\
$MPD(\w)$ &            &         &             & \CheckmarkBold       & \CheckmarkBold          &        \\ \bottomrule
\end{tabular}
}
\vspace{1mm}
\caption{A Comparison of PQ Index ($\mathbf{I}_{p, q}(\w)$), Gini Index ($Gini(\w)$), and the Maximum Pairwise Difference ($MPD(\w)$), in terms of the six ideal properties for sparsity measures \citep{hurley2009comparing}.
All six properties hold for the PQ Index and the Gini Index, whereas the Maximum Pairwise Difference only satisfies properties  \textbf{(D4)} and \textbf{(P1)}.
}\label{tab_SixProperties}
\end{table}
\subsection{Proofs}
\label{append:proofs}
\begin{proof}[Proof of Theorem~\ref{thm_PQmax}]
When $w_k\neq 0$ and $w_j =0$ ($j\neq k$), 
\begin{align*}
\mathbf{I}_{p, q}(\w) =&\ 1 - d^{\frac{1}{q} - \frac{1}{p}} \cdot \frac{\|w_k\|_p}{\|w_k\|_q}\\
=&\ 1 - d^{\frac{1}{q} - \frac{1}{p}}.
\end{align*}
Thus, we complete the proof.
\end{proof}

\begin{proof}[Proof of Theorem~\ref{thm_PQminimumUniq}]
We utilize the property of H\"oder's inequality to prove 
the theorem: 
\begin{lemma}[H\"{o}lder's inequality] For $a_i, b_i \in\mathbb R$ $(i=1, \ldots, d)$, and $r_1,r_2> 1$ such that $1/r_1 + 1/r_2=1$, we have
	$$\sum_{i=1}^d |a_i b_i| \leq\left(\sum_{i=1}^d |a_i|^{r_1}\right)^{\frac{1}{r_1}}\left(\sum_{i=1}^d |b_i|^{r_2}\right)^{\frac{1}{r_2}}.$$
	The equality holds if and only if there exists $\lambda\in\mathbb R$, such that
$$
\begin{aligned}
|a_i|^{r_1} = \lambda \cdot |b_i|^{r_2},\ i=1, \dots, n.
\end{aligned}
$$
\end{lemma}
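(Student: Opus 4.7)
The plan is to derive Hölder's inequality from Young's inequality $ab \le a^{r_1}/r_1 + b^{r_2}/r_2$ (valid for $a,b\ge 0$ with $1/r_1+1/r_2=1$) by applying it termwise to suitably normalized entries. First I would dispose of the degenerate cases: if $\sum_i |a_i|^{r_1}=0$ or $\sum_i |b_i|^{r_2}=0$, then one of the sequences is identically zero, both sides of the claimed inequality vanish, and the equality characterization holds vacuously (taking $\lambda=0$ or interpreting it the obvious way). Otherwise set $A=(\sum_i |a_i|^{r_1})^{1/r_1}$ and $B=(\sum_i |b_i|^{r_2})^{1/r_2}$, both strictly positive.

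Next I would apply Young's inequality to the normalized quantities $|a_i|/A$ and $|b_i|/B$ for each $i$, obtaining $\frac{|a_i|\,|b_i|}{AB} \le \frac{1}{r_1}\frac{|a_i|^{r_1}}{A^{r_1}} + \frac{1}{r_2}\frac{|b_i|^{r_2}}{B^{r_2}}$. Summing over $i=1,\dots,d$ makes the right-hand side equal to $\frac{1}{r_1}+\frac{1}{r_2}=1$, and multiplying through by $AB$ yields $\sum_{i=1}^d |a_i b_i|\le AB$, which is the stated Hölder bound.

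For Young's inequality itself, the cleanest route is the concavity of $\log$: for $a,b>0$, $\log\!\left(\frac{a^{r_1}}{r_1}+\frac{b^{r_2}}{r_2}\right) \ge \frac{1}{r_1}\log a^{r_1}+\frac{1}{r_2}\log b^{r_2} = \log(ab)$, with equality iff $a^{r_1}=b^{r_2}$; boundary cases where $a$ or $b$ equals $0$ are immediate. This simultaneously settles the equality characterization of Hölder: equality in the summed bound forces equality in every termwise Young application, i.e.\ $|a_i|^{r_1}/A^{r_1}=|b_i|^{r_2}/B^{r_2}$ for all $i$, equivalently $|a_i|^{r_1}=\lambda|b_i|^{r_2}$ with the common ratio $\lambda=A^{r_1}/B^{r_2}$. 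Conversely, any such proportionality reverses these steps to give equality.

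The main obstacle is not the inequality itself, which is routine once Young is invoked, but careful bookkeeping of the equality condition and the degenerate case where one norm vanishes, so that the stated ``if and only if'' is proved in both directions and the constant $\lambda$ is exhibited explicitly.
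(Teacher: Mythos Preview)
Your proof via Young's inequality is correct and is the standard textbook argument. The paper, however, does not prove this lemma at all: it is stated as a classical auxiliary result inside the proof of Theorem~\ref{thm_PQminimumUniq} and immediately applied with $r_1=q/(q-p)$, $r_2=q/p$, $a_i=1$, $b_i=w_i^p$, so there is no proof in the paper to compare against.
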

Take $r_1 = q/(q-p)$, $r_2=q/p$, $a_i=1$, and $b_i=w_i^p$ $(i=1,\dots, d).$ By H\"{o}lder's inequality, 
\begin{align*}
\sum_{i=1}^dw_i^p\leq&\ d^{\frac{q-p}{q}}\biggl(\sum_{i=1}^dw_i^{q}\biggr)^{\frac{p}{q}}\\
\iff 
\|\w\|_p\leq&\ d^{\frac{1}{p}-\frac{1}{q}}\biggl(\sum_{i=1}^dw_i^{q}\biggr)^{\frac{1}{q}} = d^{\frac{1}{p}-\frac{1}{q}}\|\w\|_q\\
\iff \mathbf{I}_{p, q}(\w)\geq&\ 0,
\end{align*}
where the equality holds only when  
$$w_1=w_2=\dots=w_d.$$
Thus, we finish the proof.
\end{proof}

\begin{proof}[Proof of Theorem~\ref{thm_PQl2Distance}]
Since 
$$
\mathbf{I}_{1, 2}(\w) = \mathbf{I}_{1, 2}(\w/\|\w\|_2),
$$
it suffices to show 
\begin{equation}\label{eq_PQl2Unit}
\|\w-d^{-\frac{1}{2}}\cdot\one_d\|_2=\sqrt{2\mathbf{I}_{1, 2}(\w)}
\end{equation}
holds for a $\w$ with $\|\w\|_2=1$.

Let $$m\de \sqrt{d}\bigl(1-\mathbf{I}_{p, q}(w)\bigr).$$
Then, 
\begin{equation}\label{eq_planeform}
	\|\w\|_1=w_1+\dots+w_d=m
\end{equation}
The intersection between the unit hypersphere and the above hyperplane 
is a hypersphere in 	$\R^{d-1}$. Denote it by
$$
\mathcal C\de\{\w\mid \|\w\|_1=m,\|\w\|_2=1\}.
$$
The normal vector of \eqref{eq_planeform} is $d^{-1/2}\cdot\one_d$. 
The intersection between the normal vector and the hyperplane is the foot of the normal, which satisfies:
\begin{enumerate}
\item Its coordinates have the same value.
\item Its $l_1$ norm equals $m$.	
\end{enumerate}
Therefore, the foot of normal is $(m/d)\textbf{1}_d$. 
Next, we will show that the foot of the normal is the center of $\mathcal C$.   The proof will then be completed using the Pythagorean theorem, based on the distance between $\w$ and the foot of normal, as well as the distance between the foot of normal and $d^{-1/2}\cdot\one_d$.

We have for each point $\w\in\mathcal C$,
\begin{align*}
	\biggl\|\w - \frac{m}{d}\textbf{1}_d\biggr\|_2^2
	=&\ \|\w\|_2^2 + \biggl\|\frac{m}{d}\textbf{1}_d\biggr\|_2^2-\frac{2m}{d}\|\w\|_1\\
	=&\ 1 + \frac{m^2}{d} - \frac{2m^2}{d}\\
	=&\ 1 - \frac{m^2}{d},
\end{align*}
namely, the distance from $\w$ to the foot of normal is $\sqrt{1 - m^2/d}$. Since the distance for all $\w\in\mathcal C$ to the foot of normal are the same, this point is the center of the hypersphere.
The distance between the foot of normal and $d^{-1/2}\one_d$ is
$$
\biggl\|\frac{m}{d}\textbf{1}_d-\frac{1}{\sqrt{d}}\textbf{1}_d\biggr\|_2=\biggl\|\frac{m-\sqrt{d}}{d}\textbf{1}_d\biggr\|_2=\frac{\sqrt{d}-m}{\sqrt{d}},
$$
where $m<\sqrt{d}$ since
$m= \sqrt{d}\bigl(1-\mathbf{I}_{p, q}(w)\bigr)$.
Therefore, by Pythagorean theorem, the distance from each $\w\in\mathcal C$ to $d^{-1/2}\textbf{1}_d$ is
$$
\sqrt{\biggl(\frac{\sqrt{d}-m}{\sqrt{d}}\biggr)^2
 + \biggl(\sqrt{1 - \frac{m^2}{d}}\biggr)^2}=
 \sqrt{2\bigl(1-\frac{m}{\sqrt{d}}\bigr)} = \sqrt{2\mathbf{I}_{1, 2}(\w)}.
$$
Therefore, we obtain Equation~\eqref{eq_PQl2Unit}
and complete the proof.
\end{proof}

\begin{proof}[Proof of Theorem~\ref{thm_WwithSamePQ}]
We prove this theorem by contradiction. 
First, we will show that if  $\mathbf{I}_{1, 2}(\tilde\w) = \mathbf{I}_{1, 2}(\w)$, 
\begin{equation}\label{eq_tildec}
\tilde c=\biggl(w_1 - \frac{1}{d-1}\sum_{i=2}^dw_i\biggr)\cdot  \frac{2(d-1)^2}{d^2-d+1}.
\end{equation}
Then, we will prove that the above
 contradicts with the assumption that $\tilde w_1 = \tilde w_{max}$. 

We have

\begin{align*}
	(w_1-\tilde c)^2 + \sum_{i=2}^d\biggl(w_i+\frac{\tilde c}{d-1}\biggr)^2=&\ \|\w\|_2^2\\
	\iff w_1^2-2\tilde cw_1 + \tilde c^2 + \sum_{i=2}^d\biggl(w_i^2+\frac{2\tilde c}{d-1}w_i + \frac{\tilde c^2}{(d-1)^2}\biggr)=&\ \|\w\|_2^2\\
	\iff \sum_{i=1}^dw_i^2 -2\tilde c\biggl(w_1 - \frac{1}{d-1}\sum_{i=2}^dw_i\biggr) + \frac{d^2-d+1}{(d-1)^2}\tilde c^2=&\ \|\w\|_2^2\\
	\iff  \frac{d^2-d+1}{(d-1)^2}\tilde c^2 -2\tilde c\biggl(w_1 - \frac{1}{d-1}\sum_{i=2}^dw_i\biggr)=&\ 0
\end{align*}
Let 
$$q\de \frac{d^2-d+1}{(d-1)^2}.$$
By the quadratic formula, we have
\begin{align*}
	\tilde c=&\ \frac{2\biggl(w_1 - \frac{1}{d-1}\sum_{i=2}^dw_i\biggr)+\sqrt{4\biggl(w_1 - \frac{1}{d-1}\sum_{i=2}^dw_i\biggr)^2}}{2q}\\
	=&\ \frac{2\biggl(w_1 - \frac{1}{d-1}\sum_{i=2}^dw_i\biggr)}{q} \quad(\text{we require $\tilde c\neq 0$).}
\end{align*}
Therefore, we obtain Equation~\eqref{eq_tildec}.

Recall that after the transformation, $w_1$ will decrease by $\tilde c$ and the average of $w_2,\dots,w_d$
will increase by $\tilde c/(d-1)$.
Since
\begin{align*}
\tilde c + \frac{\tilde c}{d-1} =&\ 2\biggl(w_1 - \frac{1}{d-1}\sum_{i=2}^dw_i\biggr)\cdot  \biggl(\frac{(d-1)^2}{d^2-d+1} + 
\frac{d-1}{d^2-d+1}\biggr)\\
=&\ 
2\biggl(w_1 - \frac{1}{d-1}\sum_{i=2}^dw_i\biggr)\cdot  
\biggl(\frac{d^2-d}{d^2-d+1}\biggr)\\
>&\ w_1 - \frac{1}{d-1}\sum_{i=2}^dw_i \quad (\text{we assume $d\geq 2$}),
\end{align*}
$\tilde w_1$ is smaller than the average of $\tilde w_2,\dots,\tilde w_d$, which gives a contradiction. Thus, we finish the proof.
\end{proof}

\begin{proof}[Proof of Theorem~\ref{thm_MeasuresBounds}]
We first prove Inequality~\eqref{eq_boundGiniMD}, then Inequality~\eqref{eq_boundMDPQ}, finally Inequality~\eqref{eq_boundPQGini}.

\textbf{Proof of Inequality~\eqref{eq_boundGiniMD}}
We have 
\begin{align*}
Gini(\w) =&\ \frac{\sum_{i=1}^d \sum_{j=1}^d\left|w_i-w_j\right|}{2 d \sum_{i=1}^d w_i}\\
\leq &\ \frac{\sum_{i=1}^d \sum_{j=1}^d MPD(\w)}{2 d \sum_{i=1}^d w_i}\\
=&\ \frac{d^2 MPD(\w)}{2 d \|\w\|_1}\\
=&\ \frac{d MPD(\w)}{2 \|\w\|_1}. 
\end{align*}
Since 
$\|\w\|_2\leq \|\w\|_1$, we obtain Inequality~\eqref{eq_boundGiniMD}.

\textbf{Proof of Inequality~\eqref{eq_boundMDPQ}}
By Theorem~\ref{thm_PQl2Distance},
\begin{align*}
	\bigl|w_{max}/\|\w\|_2-d^{-\frac{1}{2}}\bigr|\leq &\ \sqrt{2\mathbf{I}_{p, q}(\w)},\\
	\bigl|w_{min}/\|\w\|_2-d^{-\frac{1}{2}}\bigr|\leq &\ \sqrt{2\mathbf{I}_{p, q}(\w)}.
\end{align*}
By triangular inequality, 
\begin{align*}
\|\w\|_2^{-1}MPD(\w) =&\  
|w_{max}/\|\w\|_2-w_{min}/\|\w\|_2|\\
\leq&\ \bigl|w_{max}/\|\w\|_2-d^{-\frac{1}{2}}\bigr| + \bigl|w_{min}/\|\w\|_2-d^{-\frac{1}{2}}\bigr|\\
\leq&\ 2 \sqrt{2\mathbf{I}_{p, q}(\w)}.
\end{align*}
Therefore, we obtain Inequality~\eqref{eq_boundMDPQ}.

\textbf{Proof of Inequality~\eqref{eq_boundPQGini}}
Since both PQ Index and Gini Index are scale-invariant, it suffices to show that the inequality holds for $\w$ satisfying $\|\w\|_1 = 1$.
 We will first prove that $\mathbf{I}_{1, 2}(\w)$ is convex. 
 Next, we will show that 
 each $\w$ can be expressed as a convex combination of $\ddot \w_1,\dots,\ddot \w_d\in \R^d$, where 
 \begin{align*}
\ddot \w_1\de&\ [1,0,0\dots,0]^\T,\\
\ddot \w_2\de&\ [1/2,1/2,0\dots,0]^\T,\\	
\ddot \w_3\de&\ [1/3,1/3,1/3\dots,0]^\T,\\	
\vdots&\ \\
\ddot \w_d\de&\ [1/d,1/d,1/d\dots,1/d]^\T,
\end{align*}
 and that
$$
 \mathbf{I}_{1, 2}(\dot\w_j) \leq Gini(\dot\w_j),\ \forall j\in\{1,\dots,d\}.
$$

\textit{1) Convexity of $\mathbf{I}_{1, 2}(\w)$:}

 We define
$$
f(x) \de 1 - d^{-1/2}\frac{1}{\sqrt{x}},\quad x>0,
$$
which is monotonic increasing,
 and
$$
g(\w)\de \|\w\|_2.
$$
Because the Hessian matrix of $g(\w)$ is $\bm{I}$, $g(\cdot)$ is a convex function. 
Since
$$
\mathbf{I}_{1, 2}(\w)= f\circ g(\w), 
$$
which is the composition of a monotonic increasing function and a convex function, it is also convex.

\textit{2) Properties of $\ddot \w_1,\dots,\ddot \w_d$:}
Without the loss of generality, we assume $w_1\geq w_2\geq\dots\geq w_d$. 
Each $\w$ can be expressed as 
$$
\w = \alpha_1 \ddot\w_1 + \dots + \alpha_d \ddot\w_d,
$$
where 
\begin{align*}
\alpha_d =&\ dw_d,	\\
\alpha_{d-1}=&\  (d-1)(w_{d-1} - w_d),\\
\alpha_{d-2}=&\  (d-2)(w_{d-2} - w_{d-1}),\\
\vdots&\ \\
\alpha_{1}=&\  w_1-w_2.
\end{align*}
Since $w_1\geq w_2\geq\dots\geq w_d$, the coefficients $\alpha_1,\dots,\alpha_d\geq0$. 
Also, by our assumption, $$\|\w\|_1 = \|\ddot \w_1\|_1=\dots=\|\ddot \w_d\|_1=1,$$ 
we have $\sum_{i=1}^n \alpha_i=1$. 
Therefore, $\w$ is a convex combination of $\ddot \w_1,\dots,\ddot \w_d$. 
By convexity of $PQI(\w)$, we have
\begin{equation}\label{eq_PQConvexBound}
\mathbf{I}_{1, 2}(\w)\leq \alpha_1 \mathbf{I}_{1, 2}(\ddot\w_1) +\dots+\alpha_d \mathbf{I}_{1, 2}(\ddot\w_d). 
\end{equation}
since 
$$
Gini(\w) = \frac{1}{d}\sum_{i=1}^d (d+1-2i)w_i 
$$
is a linear function of $\w$, we have
\begin{equation}\label{eq_GiniConvexBound}
Gini(\w)= \alpha_1 Gini(\ddot\w_1) +\dots+\alpha_d Gini(\ddot\w_d). 
\end{equation}

According to Inequality~\eqref{eq_PQConvexBound} and Equation~\eqref{eq_GiniConvexBound}, it remains to show for each $j\in\{1,\dots,d\}$,
$$
PQI(\ddot \w_j) \leq Gini(\ddot \w_j).
$$
For each $j\in\{1,\dots,d\}$, we have
$$
PQI(\ddot \w_j) = 1- \frac{1}{\sqrt{d}}\cdot \frac{1}{\sqrt{j\cdot j^{-2}}} = 1-\sqrt{\frac{j}{d}},
$$
and 
$$
Gini(\ddot \w_i) = 
\frac{d}{d}\cdot \sum_{i=1}^d w_i- \frac{1}{d}\cdot\sum_{i=1}^j (2i-1)j^{-1}=
1- \frac{1}{d}\cdot\sum_{i=1}^j (2i-1)j^{-1}=1- \frac{j}{d}.
$$
Since 
$$
\sqrt{\frac{j}{d}} \geq \frac{j}{d},\ j\in\{1,\dots,d\},
$$
we complete the proof.

\end{proof}

\begin{proof}[Proof of Theorem~\ref{thm_PQ_PM_difference}]
Since both PQ Index and Gini Index are scale-invariant, without loss of generality, we assume 	
$
\|\w^{(1)}\|_q = \|\w^{(2)}\|_q.
$
Then, 
\begin{align*}
	\mathbf{I}_{p, q}(\w^{(1)})<&\ \mathbf{I}_{p, q}(\w^{(2)})\\
	\iff \|\w^{(1)}\|_p >&\  \|\w^{(2)}\|_p\\
	\iff  \sum_{i=1}^d \bigl(w^{(1)}_i\bigr)^p >&\  \sum_{i=1}^d \bigl(w^{(2)}_i\bigr)^p.
\end{align*}
Since $w^{(1)}_{max} = w^{(2)}_{max}$ and $w^{(1)}_{min} = w^{(2)}_{min}$, 
$$
\|\dot\w^{(1)}\|_p >  \|\dot\w^{(2)}\|_p.
$$
Therefore, we obtain $\mathbf{I}_{p, q}(\dot\w^{(1)})< \mathbf{I}_{p, q}(\dot\w^{(2)})$ and complete the proof.
\end{proof}

\subsection{Overview of Fairness Criteria}\label{tab:fair_criteria_all}

\begin{table*}[ht!]
\centering
\resizebox{\textwidth}{!}{%
\footnotesize
\begin{tabularx}{\linewidth}{@{}cc>{\centering\arraybackslash}X}
\toprule
\textbf{Problem}                 & \textbf{Criteria}                               & \textbf{Expression} \\ \midrule
\multirow{6}{*}{Classification}  & Statistical Parity      &  $\max_{y \in \mathcal{Y}} \max_{a, a' \in \mathcal{A}} \Big| \mathbb{E}(f(X_{a}) = y) - \mathbb{E}(f(X_{a'}) = y) \Big|$                   \\ 
                                 & \cellcolor{gray!15}$S\mh$Statistical Parity                  &  $\max_{y\in\mathcal{Y}}S\bigl(\mathbb{E}(f(X_{a_i}) = y)_{i=1}^{|\mathcal{A}|}\bigr)$              \\ 

                                 \cmidrule(lr){2-3}
                                 & Equalized Odds          &  $\mathclap{\max_{y,y'\in\mathcal Y}\max_{a, a' \in \mathcal{A}} \Big| \mathbb{P}_{y',a}(f(X) = y) -  
    \mathbb{P}_{y',a'}(f(X) = y) \Big|}$   \\ 
                                 & \cellcolor{gray!15}$S\mh$Equalized Odds                       &  $\max_{y\in\mathcal{Y}}S\bigl([g(f(X_{a_i}), Y_{a_i})]_{i=1}^{|\mathcal{A}|}\bigr)$                   \\ 
                                 \midrule
\multirow{12}{*}{Regression}     & \cellcolor{gray!15}Statistical Parity (\textit{weak})             & $\max_{a, a' \in \mathcal{A}} \Big|\mathbb{E}(f(X_a)) - \mathbb{E}(f(X_{a'}))\Big|$          \\ 
                                & \cellcolor{gray!15}$S\mh$Statistical Parity (\textit{weak})             & $S\bigl([\mathbb{E}(f(X_{a_i}))]_{i=1}^{|\mathcal{A}|}\bigr)$             \\ \cmidrule(lr){2-3}
                                & Statistical Parity  &   $\sup_{y \in \mathcal Y} \max_{a, a' \in \mathcal{A}}   \Big| \mathbb{P}_a(f(X) \leq y) - \mathbb{P}_{a'}(f(X) \leq y) \Big|$                  \\ 
                                 & \cellcolor{gray!15}$S\mh$Statistical Parity              &   $\sup_{y \in \mathcal Y} S\bigl([\mathbb{P}_{a_i}(f(X) \leq y )]_{i=1}^{|\mathcal{A}|}\bigr)$                  \\ \cmidrule(lr){2-3}
                                 & \cellcolor{gray!15}Statistical Parity (\textit{W})              &   $ \int_{y \in \mathcal Y} \max_{a, a' \in \mathcal{A}} \Big| \mathbb{P}_a(f(X) \leq y) - \mathbb{P}_{a'}(f(X) \leq y) \Big| dy$  \\
                                 & \cellcolor{gray!15}$S\mh$Statistical Parity (\textit{W})              &   $\int_{y \in \mathcal Y} S\bigl([\mathbb{P}_{a_i}(f(X) \leq y)]_{i=1}^{|\mathcal{A}|}\bigr) dy$                  \\ 
                                 \cmidrule(lr){2-3}
                                 & \cellcolor{gray!15} Equalized Odds                        &   $\max_{a, a' \in \mathcal{A}}\Big|g(f(X_{a}), Y_{a}) - g(f(X_{a'}), Y_{a'})\Big|$   \\
                                 & \cellcolor{gray!15}$S\mh$Equalized Odds                        &   $S\bigl([g(f(X_{a_i}), Y_{a_i})]_{i=1}^{|\mathcal{A}|}\bigr)$    \\ 
                                 \bottomrule
\end{tabularx}
}
\caption{Sparisty-based criteria ($S\mh*$) and their Maximum Pairwise Distance (MPD) counterparts.}
\label{tab:table_all}
\end{table*}

We populate fairness criteria in Table \ref{tab:table_all} above for completeness.
Formally, we define the following criteria under the Maximum Pairwise Difference and Sparsity.
\begin{definition}[Statistical Parity (\textit{weak})]
    Given the regression output $f(X)$, the weak statistical parity is defined as
    \begin{equation*}\max_{a, a' \in \mathcal{A}} \Big| \mathbb{E}(f(X_a)) - \mathbb{E}(f(X_a')) \Big|,
\end{equation*}
where $\mathbb{E}(\cdot)$ stands for the expectation over the input.
\end{definition}
Next, we introduce three types of sparsity-based statistical parities for regression models:
a weak statistical parity based on the model output $f(X)$, a statistical parity based on the cumulative distribution function (CDF) of $f(X)$, and a statistical parity that combines the CDF with integration.

\begin{definition}[$S_r$-Statistical Parity (\textit{weak})] 
The weak sparsity-based statistical parity is measured by
\begin{equation*}S\bigl([\mathbb{E}(f(X_{a_i}))]_{i=1}^{|\mathcal{A}|}\bigr).
\end{equation*}
\end{definition}

\begin{definition}[$S_r$-Statistical Parity (\textit{W})]
   The integral sparsity-based statistical parity is defined as
    \begin{equation}
        \int_{y \in \mathcal Y} S\bigl([\mathbb{P}_{a_i}(f(X) \leq y)]_{i=1}^{|\mathcal{A}|}\bigr) dy. 
    \end{equation}
\end{definition}
This statistical parity measure is inspired by the Wasserstein distance \citep{Kantorovich1960MathematicalMO, villani2003topics}, which quantifies the difference between two probability distributions by integrating the difference between their CDFs.

\section{Experiment Implementation}
\subsection{Dataset Details}
\label{appendix:data}
\paragraph{Classification.} For classification task we benchmark on six datasets, with four binary classification datasets and two multi-classification datasets:
\begin{itemize}
    \item \textit{UCI Adult} \citep{murphy1996uci}: The task in dataset is to use provided demographic features to predict whether someone's income is above 50k or not. Gender in the dataset is treated as the sensitive attribute. This dataset contains 48,842 instances. ($|\mathcal{Y}|=2$, $|\mathcal{A}| = 2$)
    \item \textit{COMPAS} \citep{angwin2016machine}: The task involves predicting whether an individual is likely to reoffend based on their criminal history, time spent in prison, demographic information, and risk scores, with race (Caucasian vs. African-American) serving as the sensitive attribute. The dataset comprises 7,918 instances. ($|\mathcal{Y}|=2$, $|\mathcal{A}| = 2$)
    \item \textit{HSLS} \citep{ingels2011hsls}: High School Longitudinal Study contains 23,000+ education-related surveys collected from parents and students. It contains features such as demographic and school information of the students, as well as their academic performances from different school years. The binary target is whether a student's test score is among top 50\% performer or not \citep{jeong2022fairness}, with a binary sensitive attribute race (Under represented minorty vs Asian/White). We use an preprocessed version encompassing 14,509 instances provided from \citet{alghamdi2022beyond} which filtered out entries with missing values from original data. ($|\mathcal{Y}|=2$, $|\mathcal{A}| = 2$)
    \item \textit{Enem} \citep{inep2020microdados}: This publicly available dataset is collected from 2020 Brazilian high school national exam and is consist of student demographics, socioeconomic questionnaire answers and exam scores. We preprocessed and randomly sampled 50k instances from the original dataset which contains around 5.8 million records, following the procedure in \citet{alghamdi2022beyond}. Race is treated as the sensitive feature and binned grade as the target. ($|\mathcal{Y}|=5$, $|\mathcal{A}| = 5$)
    \item \textit{ACSIncome} \citep{ding2021retiring}: While UCI Adults datasets has been the major source for fairness research, this data is a superset of Adults derived from US Census survey and encompasses 1,664,500 entries. We construct this data with 5 races categories and 5 income categories. ($|\mathcal{Y}|=5$, $|\mathcal{A}| = 5$)
    \item \textit{Simulation}: Besides real world data, we include a simulated binary classification dataset with two groups and 10 features, each group contains 2,500 examples and examples in each group is drawn from Bernoulli distributions with $p=0.5$ and $p=0.8$. ($|\mathcal{Y}|=2$, $|\mathcal{A}| = 2$)
    \item \textit{Simulation (Multigroup)}: We simulated a binary classification dataset with an arbitrary number of sensitive groups to examine different criteria under intersectional fairness. Specifically, we aim to highlight the differences between metrics based on Maximum Pairwise Distance (MPD) and those based on sparsity. The total number of samples was fixed at 100,000, and we varied the number of groups ($n_g$) to generate corresponding datasets. For each group, the binary class weights were set to $0.5 - 0.4 \times \frac{g_i}{n_g - 1}$ and $0.5 + 0.4 \times \frac{g_i}{n_g - 1}$, where $g_i$ is the group index. This setup ensures that the maximum class weight difference across all groups remains 0.4 for both classes, with varying intermediate values depending on group size.
    \item \textit{Adult (Multigroup)}: To evaluate intersectional fairness on the \textit{Adult} dataset, we consider \textit{gender} ($|A| = 2$), \textit{race} ($|A| = 2$), and \textit{age} (continuous) as candidate sensitive attributes. For the continuous variable \textit{age}, we discretize observations into quantile-based bins to ensure approximately equal sample sizes across intervals. We experiment with age bins of sizes $2$, $3$, $4$, and $5$. To increase the number of sensitive groups, we construct attribute combinations: \textit{gender}, \textit{gender $\times$ race}, and \textit{gender $\times$ race $\times$ age}, yielding dataset variants with $2$, $10$, $20$, $30$, $40$, and $50$ sensitive groups.
\end{itemize}

\paragraph{Regression.} As for regression, we benchmark on two commonly used fair regression dataset, plus one simulated dataset:
\begin{itemize}
    \item \textit{Communities \& Crime} \citep{redmond2002data}: This dataset is about socioeconomics, crime data of US communities. The task is to predict number of violent crimes per 100,000 population using the provided features. We use race as the binary sensitive attribute (White vs non-White). The dataset contains in total 1,994 instances. ($|\mathcal{A}|=2$)
    \item \textit{LawSchool} \citep{wightman1998lsac}: This dataset is from Law School Admissions Councils National Longitudinal Bar Passage Study. The original datasets contains 22,407 records. After filtering out records with missing value and unknown races, it ends up having 20,053 instances. We make race as the sensitive attribute and predict the student's undergraduate GPA. ($|\mathcal{A}|=4$)
    \item \textit{Simulation}: Like classification, we include a simulated regression dataset with 1 feature. For each group, the feature is drawn from different gaussian distributions ( $\mathcal{N}(30, 4)$ vs $\mathcal{N}(10, 4)$) and target $Y$ is produced using the same coefficient but different $Var(\epsilon)$ (10 vs 1). ($|\mathcal{A}|=2$)
\end{itemize}

\subsection{Experiments Details}
\label{appendix:exp}
For all datasets, we use an 80/20 split for training and testing, and conduct 10 independent experiments with different random seeds to evaluate performance. During model training, we apply feature normalization to improve training stability. Additionally, for regression tasks, a min-max transformation is applied to the target variable to standardize its range.

We used existing implementations for different bias-mitigation algorithms as they are available. Specifically, we use implementations from \textit{AIF 360} library\footnote{\textit{AIF360}: https://github.com/Trusted-AI/AIF360} for \textit{Reduction}, \textit{Rejection}, \textit{Reweighting}, \textit{EqOdds} and \textit{CalEqOdds} in classification and \textit{FairReg} in regression. For the other benchmark algorithms, we adapt implementations from their public code repositories.\footnote{\textit{FairProj}: https://github.com/HsiangHsu/Fair-Projection}\textsuperscript{,}\footnote{\textit{LinearPost$_R$}: https://github.com/rxian/fair-regression}\textsuperscript{,}\footnote{\textit{WassBC}: https://github.com/rxian/fair-regression (The \textit{LineaPost} code repo also provides the python implementation for it)}\textsuperscript{,}\footnote{\textit{LinearPost$_C$}: https://github.com/uiuctml/fair-classification}\textsuperscript{,}\footnote{\textit{FairRR}: https://github.com/UCLA-Trustworthy-AI-Lab/FairRR}

Below, we provide the hyperparameter range selected for each method that produces a trade-off curve in our benchmark. For those methods, the hyperparameter controls the tolerance of a fairness criteria violation.

\begin{table}[ht!]
\centering
\small
\begin{tabular}{@{}c|c@{}}
\toprule
\multicolumn{1}{c}{\textbf{Method}} & \multicolumn{1}{c}{\textbf{Hyperparameter}} \\ \midrule
                           \textit{Reduction} (SP)&   $0.0001, 0.01, 0.05, 0.5, 1, 3, 5, 10, 50, 100, 500$    \\ 
                           \textit{Reduction} (EO)&   $0.0001, 0.01, 0.05, 0.5, 1, 3, 5, 10, 50, 100, 500$     \\
                           \textit{Rejection} (SP)& $0.001, 0.005, 0.01, 0.02, 0.05, 0.1, 0.2, 0.5, 1, 2$        \\
                           \textit{Rejection} (EO)&  $0.001, 0.005, 0.01, 0.02, 0.05, 0.1, 0.2, 0.5, 1, 2$        \\
                           \textit{FairProj$_{KL}$} (SP)& $0.00, 0.001, 0.005, 0.01, 0.05, 0.1, 0.3, 0.5, 1.0$        \\ 
                           \textit{FairProj$_{KL}$} (EO)& $0.00, 0.001, 0.005, 0.01, 0.05, 0.1, 0.3, 0.5, 1.0$   \\
                           \textit{FairProj$_{CE}$} (SP)& $0.00, 0.001, 0.005, 0.01, 0.05, 0.1, 0.3, 0.5, 1.0$        \\ 
                           \textit{FairProj$_{CE}$} (EO)& $0.00, 0.001, 0.005, 0.01, 0.05, 0.1, 0.3, 0.5, 1.0$ \\
                           \textit{FairRR} (SP)& $0.00, 0.04, 0.08, 0.12, 0.16, 0.20, 0.40$ \\

                           \textit{FairRR} (EO)& $0.00, 0.04, 0.08, 0.12, 0.16, 0.20, 0.40$ \\
                           \textit{LinearPost$_C$} (SP)& $0.001, 0.01, 0.02, 0.04, 0.06, 0.08, 0.1, 0.12, 0.14, 0.16, 1000$      \\ 
                           \textit{LinearPost$_C$} (EO)& $0.001, 0.01, 0.02, 0.04, 0.06, 0.08, 0.1, 1000$\\
                            \textit{LinearPost$_R$}& $0.00, 0.005, 0.01, 0.02, 0.05, 0.10, 0.15, 0.25, 0.30, 0.35, 0.45, 0.5, 1.0$\\
                            \textit{FairReg}& $0.001, 0.005, 0.01, 0.02, 0.05, 0.1, 0.4, 0.8, 0.99$\\
                           \bottomrule
\end{tabular}
\label{tab:benchmark_hyparam}
\end{table}

\section{Additional Results}
\subsection{Ablation Studies}\label{sec: ablat}
In this section we conduct ablation studies for the proposed $S\mh*$ metric from the following perspectives: 1) Ensuring input values are positive, 2) Different $p$ and $q$ in PQI, 3) Different performance metric $g(\cdot)$, 4) Different $S(\cdot)$, 5) Different multi-class aggregations. 

\begin{figure}[!ht]
    \centering
    \includegraphics[width=\linewidth]{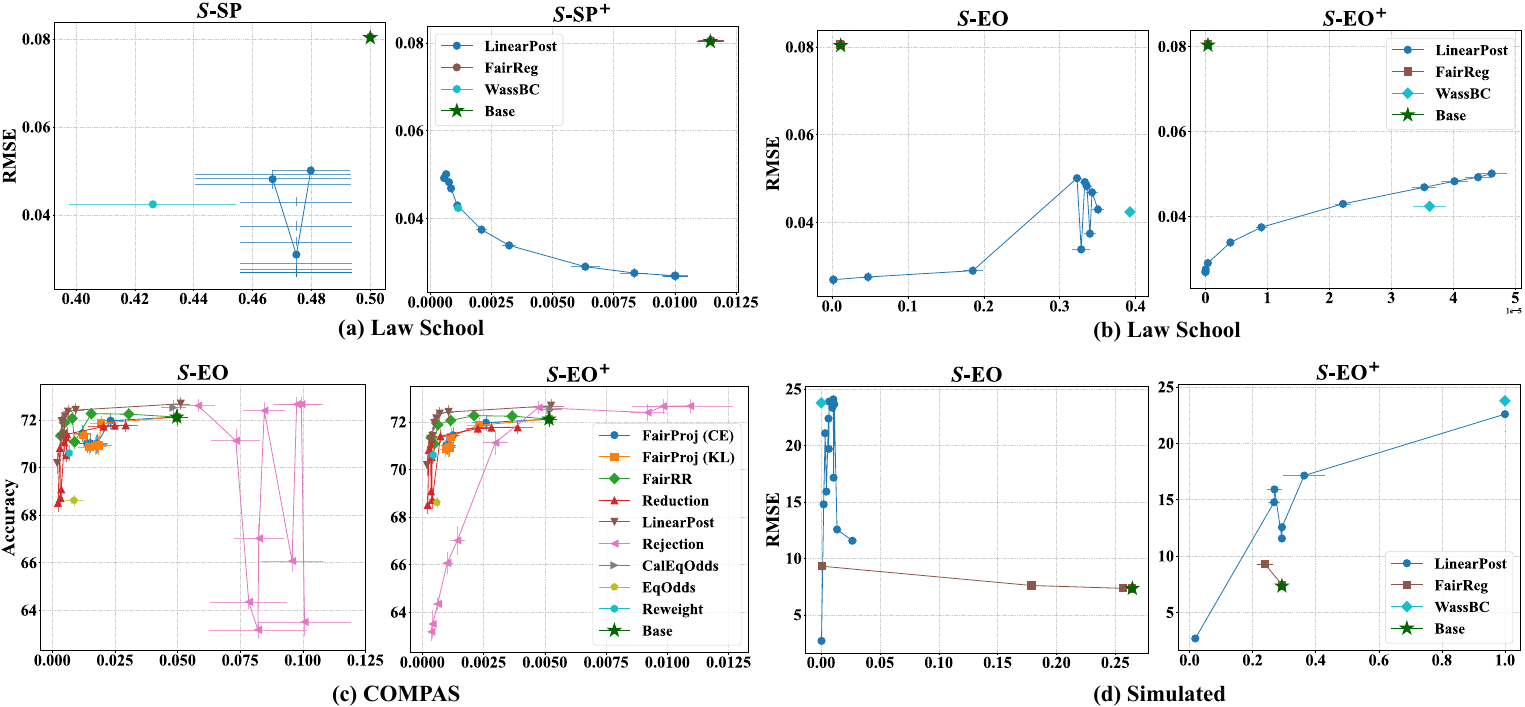}
    \caption{Results of applying $\w = \exp({\w})$ to ensure positivity in different dataset and metrics.}
    \label{fig:ablat_pos}
\end{figure}

\vspace{-0.1cm}

\paragraph{Positivity.}\label{ablat:pos} Sparsity measure like the PQ Index is sensitive to the existence of 0 or extremely small values in the input, as they are measuring the ratio deviation. In addition, $S(\cdot)$ does not support negative values. In this ablation, we demonstrate the results when we ensure the positivity of the input values by applying the transformation $\w = \exp({\w})$. 

As shown in the Figure~\ref{fig:ablat_pos}, the curves \textit{LinearPost} and \textit{Rejection} exhibit inconsistencies and high variability in the sparsity measure across multiple experiments due to extremely small values in confusion matrix (\textit{Compas}), RMSE (\textit{LawSchool}) or MSE loss (\textit{LawSchool}) within a group. We show that by applying an exponential function to ensure all values are moderately positive, the expected trade-off curves can be recovered. Note that while this is a feasible solution in practice, other possible transformations still need further study. 

\paragraph{Performance Metric.} We perform ablation studies on $g(\cdot)$ in $S\mh$EO by replacing it with other classification metrics computed from the F1 score, Area Under the Receiver Operating Characteristic Curve (AUROC) or a cross entropy loss function. We assess whether existing bias mitigation methods still work under these settings.
\begin{figure}[!ht]
    \centering
    \includegraphics[width=\linewidth]{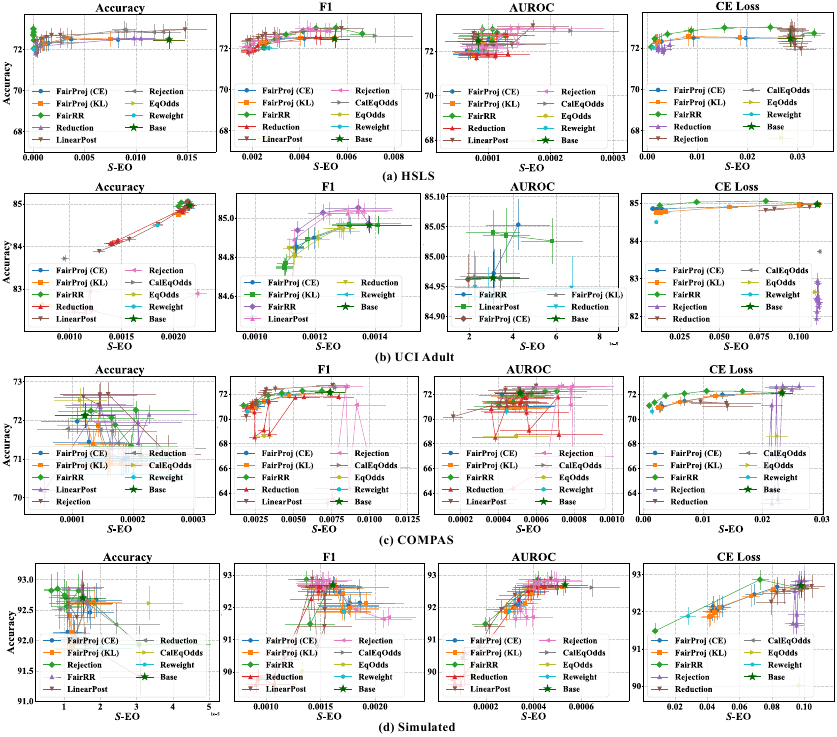}
    \caption{Replacing $g(\cdot)$ with other metrics in $S\mh$EO on various binary classification dataset. \textit{LinerPost} does not provide outputs in probability space for CE loss calculation.}
    \label{fig:ablat_g_clf}
\end{figure}
\vspace{-0.1cm}
From the \textit{HSLS} experiment results (Figure~\ref{fig:ablat_g_clf} (a)), we observe the trade-off curves when $g(\cdot)$ is specified as accuracy, F1 score or cross entropy loss. However, since the base classifier is already considered \textit{fair} when performance metric $g(\cdot)$ is AUROC as the PQI Index is small, such a trade-off is not~observed. 

We include additional results from the remaining binary classification datasets in Figure~\ref{fig:ablat_g_clf} (b)-(d). As we empirically observe from the figure, these bias mitigation trade-off curves are generally preserved if the base classifier is not considered adequately fair (e.g., $S\mh$EO $\le 10^{-4}$). In summary, understanding the robustness of these bias mitigation methods across various performance metrics $g(\cdot)$ still requires further efforts.

Besides the ablation study of $g(\cdot)$ in classification tasks, we also ablate $g(\cdot)$ in regression by replacing MSE with Mean Absolute Error (MAE) and R-squared ($R^2$).
\begin{figure}[!ht]
    \centering
    \includegraphics[width=0.75\linewidth]{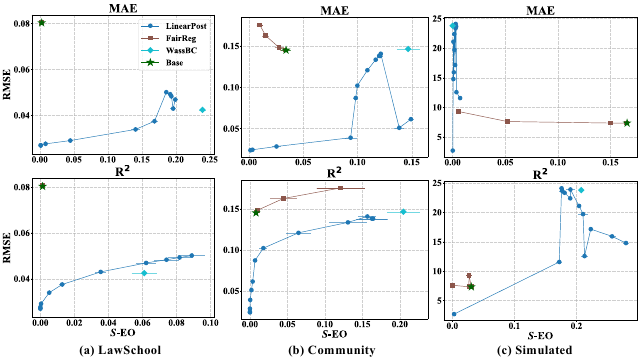}
    \caption{Replacing $g(\cdot)$ with other metrics in $S\mh$EO on regression datasets.}
    \label{fig:ablat-g-reg}
\end{figure}
As shown in Figure~\ref{fig:ablat-g-reg}, the results across multiple regression datasets suggest that different $g(\cdot)$ functions may present similar trade-off curves for $S\mh$EO when $g(\cdot)$ is specified as MAE.
However, for $R^2$, the similar pattern is only observed in \textit{LinearPost} and \textit{WassBC}.
\paragraph{PQ Ablation.} We incrementally change the values of $p$ and $q$ to check their effects on the resulting metrics. We include one example of ablating $p$ from $0.1$ to $0.9$ and $q$ from $1.1$ to $2.5$ in two classification datasets from applying \textit{LinearPost} to examine the effects on the trade-offs. The results are shown in Figure~\ref{fig:ablat-pq}.
\begin{figure}[!ht]
    \centering
    \includegraphics[width=\linewidth, trim={20 140 40 140}, clip]{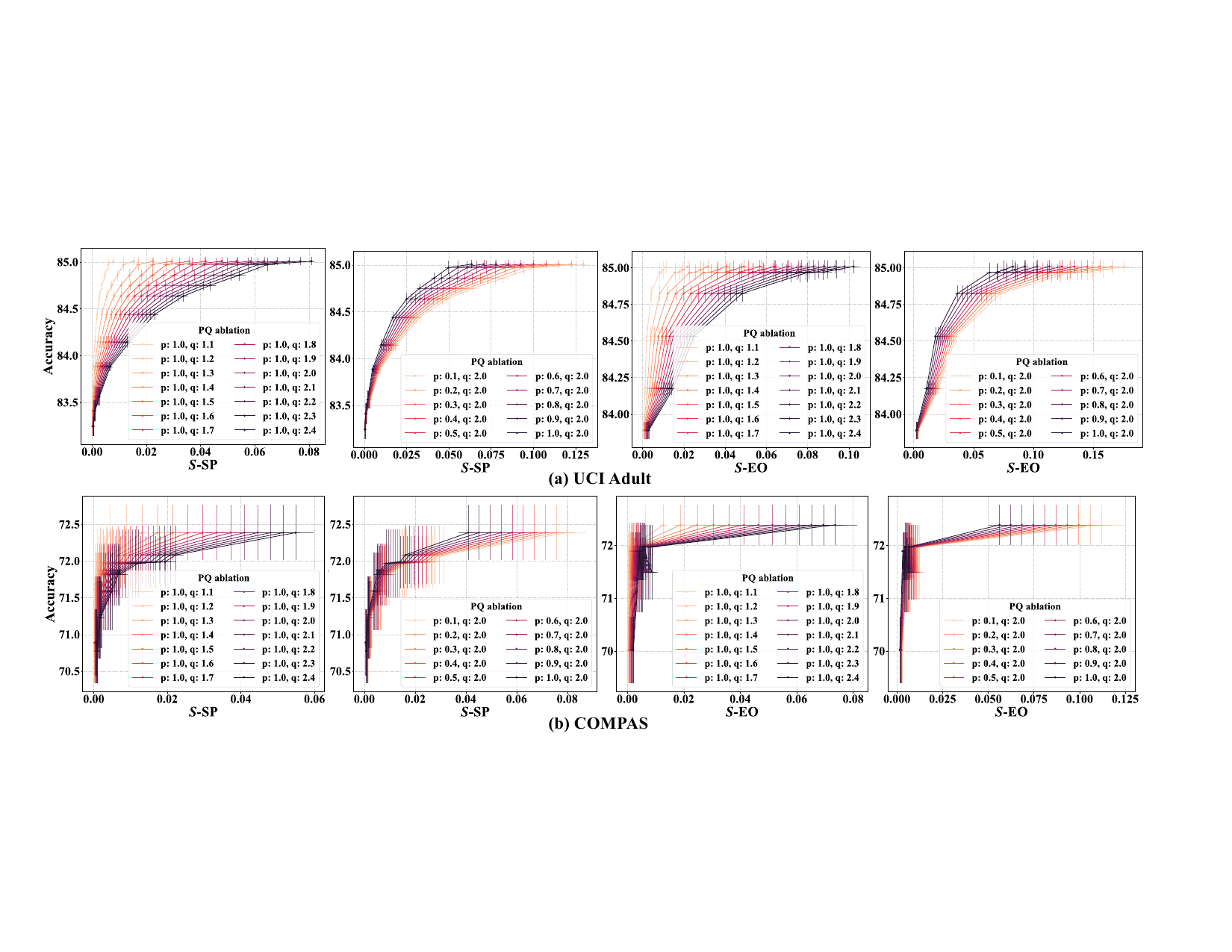}
    \caption{Ablation of $p$ and $q$ value on the output from \textit{LinearPost} algorithm for the \textit{UCI Adult} dataset ($|\mathcal{Y}=2, |\mathcal{A}|=2$) and \textit{COMPAS} dataset ($|\mathcal{Y}=2, |\mathcal{A}|=2$).}
    \label{fig:ablat-pq}
\end{figure}
From the results we can see when $p$ and $q$ are closer to each other, it generates values with a smaller scale. We also observe that the results have a smaller standard error across random data splits if $p$ and $q$ are closer.


\paragraph{Gini Index.} In Figure~\ref{fig:ablat-spar}, we demonstrate the effect of switching $S(\cdot)$ from PQ Index to Gini Index and show the effects on the simulated classification dataset. We observe almost identical pattern between the two sparsity measures, suggesting these two sparsity measures are intrinsically similar (Table~\ref{tab_SixProperties}).
\begin{figure}[!ht]
    \centering
    \includegraphics[width=\linewidth]{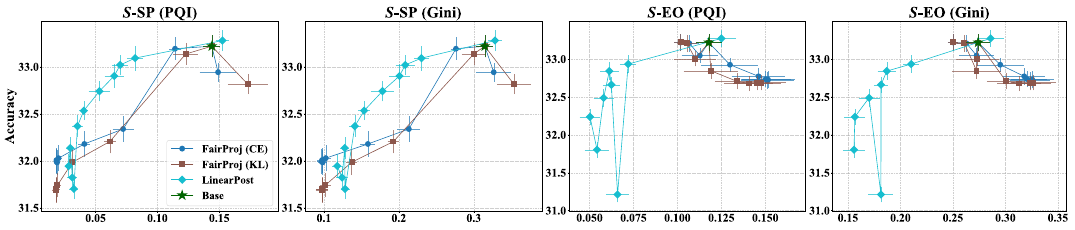}
    \vspace{-0.2cm}
    \caption{$S(\cdot)$ ablation comparisons on the \textit{Enem} classification dataset ($|\mathcal{Y}=5, |\mathcal{A}|=5$).}
    \label{fig:ablat-spar}
\end{figure}

We present the trade-off curves on other datasets by switching PQ Index in $S\mh*$ to Gini Index and also observe the identical patterns between these two sparsity measures. Results are presented in Figure~\ref{fig:append_ablat_gini}.

\begin{figure}[!ht]
    \centering
    \includegraphics[width=\linewidth]{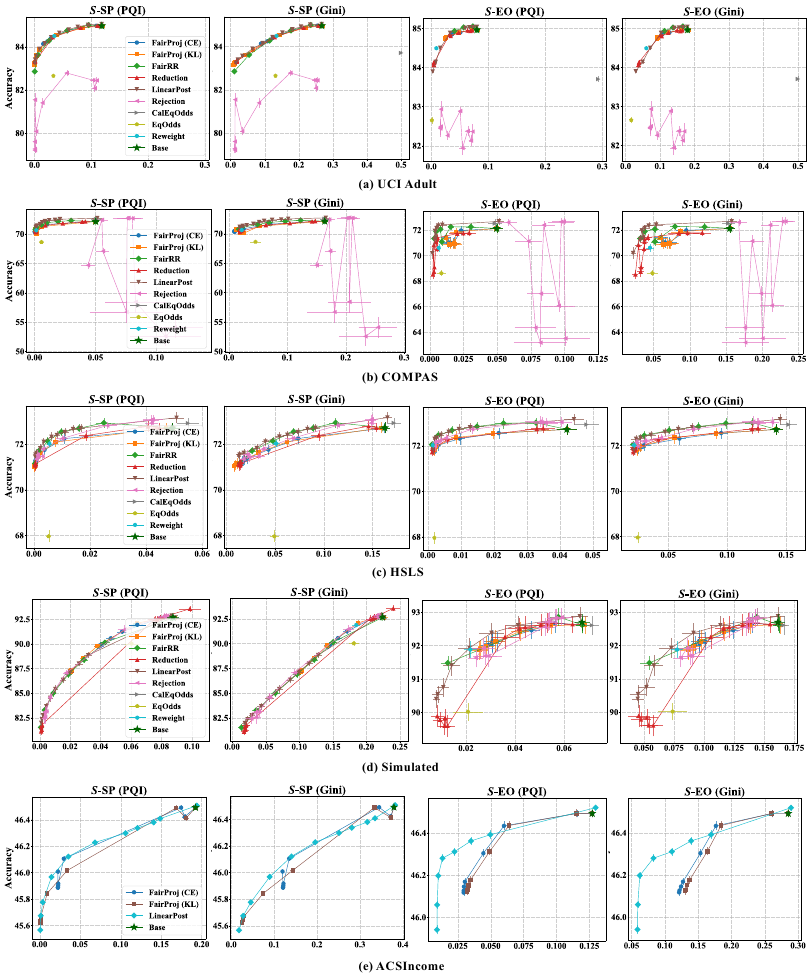}
    \caption{Comparison using PQ Index and Gini Index as $S(\cdot)$ in metric $S\mh*$.}
    \label{fig:append_ablat_gini}
\end{figure}

\begin{figure}[!ht]
    \centering
    \includegraphics[width=\linewidth]{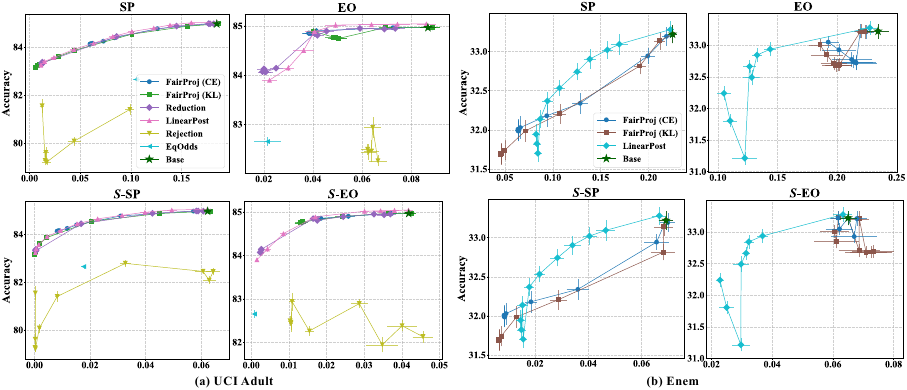}
    \caption{Replacing \texttt{max} with \texttt{mean} for multi-class aggregation.}
    \label{fig:ablat_mean}
\end{figure}

\begin{figure}[!ht]
    \centering
    \includegraphics[width=\linewidth]{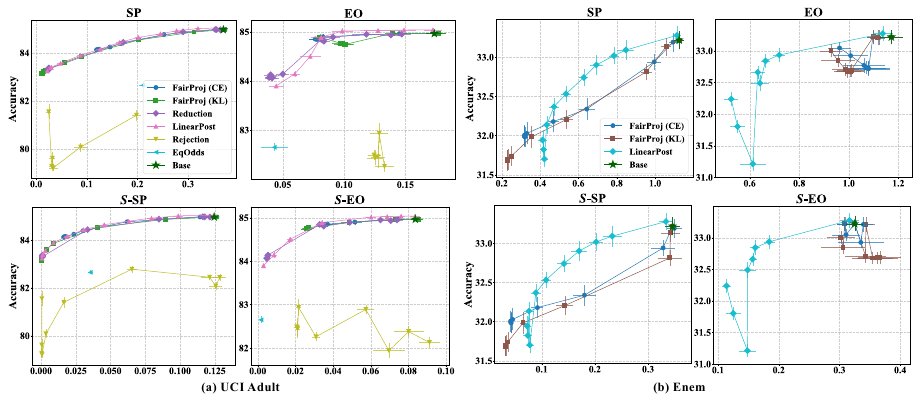}
    \caption{Replacing \texttt{max} with \texttt{sum} for multi-class aggregation.}
    \label{fig:ablat_sum}
\end{figure}

\paragraph{Multi Class Aggregation.}\label{append:mlclass}
In this ablation, we replace \texttt{max} with \texttt{mean} or \texttt{sum} for multi-class aggregations as previously mentioned (Section~\ref{sec:clf_sp}). In Figure~\ref{fig:ablat_mean} and Figure~\ref{fig:ablat_sum}, we present results for one binary classification dataset, \textit{UCI Adult} ($|\mathcal{Y}|=2, |\mathcal{A}|=2$), and one multi-class classification dataset, \textit{Enem} ($|\mathcal{Y}|=5, |\mathcal{A}|=5$). The results suggest that, for all the classification fairness criteria we consider, their trade-off patterns remain invariant regardless of the multi-class aggregation operation used.

\begin{figure}[!ht]
    \centering
    \includegraphics[width=\linewidth]{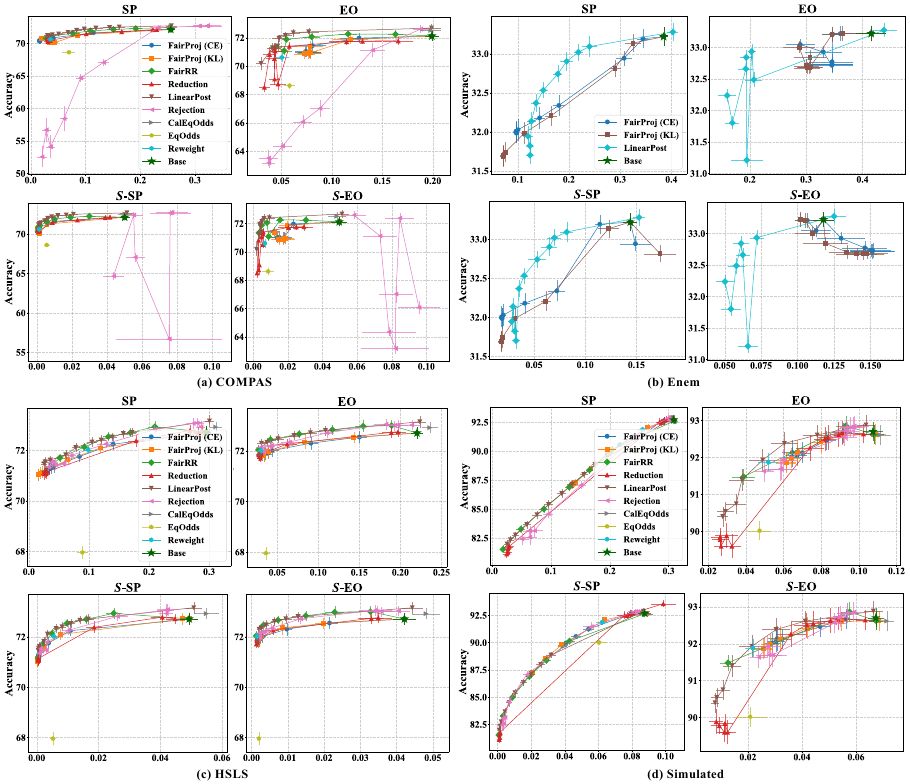}
    \vspace{-3mm}
    \caption{Comparison of sparsity criteria with baseline criteria in classification datasets.}
    \label{fig:cls_result1_append}
\end{figure}

\begin{figure}[!ht]
    \centering
    \includegraphics[width=\linewidth]{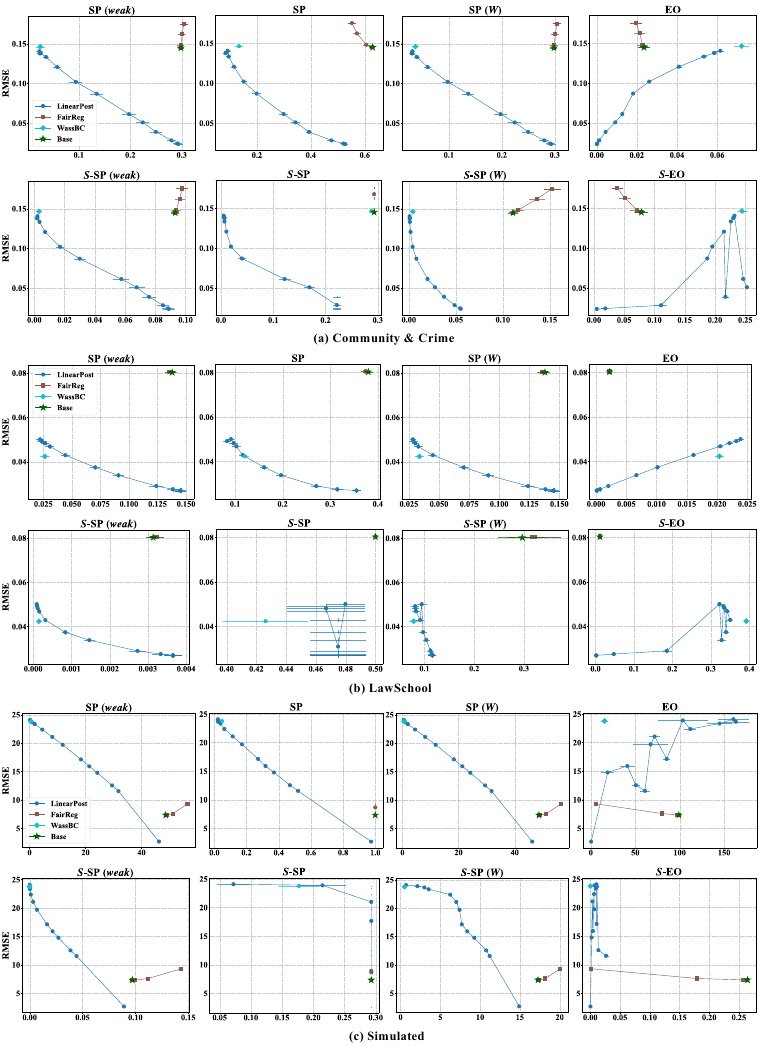}
    \vspace{-3mm}
    \caption{Comparison of all proposed sparsity criteria with baseline (MPD) criteria in three regression datasets.}
    \label{fig:reg_result1_append}
\end{figure}

\subsection{Additional Experimental Results}
\label{appendix:add_exp}

We include additional results for the classification dataset (Figure~\ref{fig:cls_result1_append}) and regression dataset (Figure~\ref{fig:reg_result1_append}) in this section. Figure~\ref{fig:cls_result1_append} demonstrates the comparison of different criteria on \textit{COMPAS} ($|\mathcal{Y}=2, \mathcal{A}=2|$), \textit{Enem} ($|\mathcal{Y}=5, \mathcal{A}=5|$), \textit{HSLS} ($|\mathcal{Y}=2, \mathcal{A}=2|$) and simulated classification ($|\mathcal{Y}=2, \mathcal{A}=2|$) datasets. The results of \textit{LawSchool} ($|\mathcal{A}=4|$) and the simulated regression dataset ($|\mathcal{A}=2|$) are shown in Figure~\ref{fig:reg_result1_append}.m
\end{document}